\title{Grid-Mapping Pseudo-Count Constraint for Offline Reinforcement Learning}
\newtheorem{theorem}{Theorem}
\newtheorem{theorem A.}{Theorem A.}
\newtheorem{definition}{Definition}
\newtheorem{corollary}{Corollary}
\newtheorem{lemma}{Lemma}
\newtheorem{assumption A.}{Assumption A.}
\newtheorem{lemma A.}{Lemma A.}
\journal{Knowledge-Based Systems}
\begin{document}

\begin{frontmatter}


\author[university]{Yi Shen}
\ead{sheny89@mail2.sysu.edu.cn}
\author[university]{Hanyan Huang \corref{corresponding}}
\ead{huanghy99@mail.sysu.edu.cn}
\cortext[corresponding]{Corresponding author.}


\affiliation[university]{organization={Sun Yat-Sen University},
            addressline={135 Xingang West Road, Guangzhou}, 
            city={Guangzhou},
            postcode={510275}, 
            state={Guandong},
            country={China}}


\begin{abstract}
Offline reinforcement learning learns from a static dataset without interacting with environments, which ensures security and thus owns a good application prospect. However, directly applying naive reinforcement learning algorithm usually fails in an offline environment due to inaccurate Q value approximation caused by out-of-distribution (OOD) state-actions. It is an effective way to solve this problem by penalizing the Q-value of OOD state-actions. Among the methods of punishing OOD state-actions, count-based methods have achieved good results in discrete domains in a simple form. Inspired by it, a novel pseudo-count method for continuous domains called Grid-Mapping Pseudo-Count method (GPC) is proposed by extending the count-based method from discrete to continuous domains. Firstly, the continuous state and action space are mapped to discrete space using Grid-Mapping, then the Q-values of OOD state-actions are constrained through pseudo-count. Secondly, the theoretical proof is given to show that GPC can obtain appropriate uncertainty constraints under fewer assumptions than other pseudo-count methods. Thirdly, GPC is combined with Soft Actor-Critic algorithm (SAC) to get a new algorithm called GPC-SAC. Lastly, experiments on D4RL datasets are given to show that GPC-SAC has better performance and less computational cost than other algorithms that constrain the Q-value.
\end{abstract}



\begin{keyword}
reinforcement learning \sep Offline reinforcement learning \sep count-based \sep OOD prediction



\end{keyword}

\end{frontmatter}



\section{Introduction}

These years, with the release of chatgpt4 and swift~\cite{openai2023gpt4,kaufmann2023champion}, the achievement of deep reinforcement learning (DRL) has once again been pushed to a new peak. In addition to chatbots and automatic operation, there are also many remarkable achievements have been made by DRL in many other fields, such as games~\cite{silver2018general,vinyals2019grandmaster,berner2019dota} and medical~\cite{yu2021reinforcement}. It is worth noting that most of these achievements do not require the agent to interact with the physical environment. Why are the current achievements of RL in physical environments less than the achievements in virtual environments? That is because DRL is learning what to do—how to map situations to actions—to maximize a numerical reward signal~\cite{sutton2018reinforcement}. In other words, the application of RL in a physical environment requires the interaction between agents and the environment, which may cause great damage to the environment or the agent~\cite{levine2020offline}. To avoid this problem, offline reinforcement learning(offline RL) that does not require interaction between agents and the environment has become a hot research topic. Offline RL, also known as batch RL, only learns policy through a static dataset without interacting with the environment, thus guaranteeing safety. Thus offline RL is considered to have the potential to be large-scale used in physical scenarios~\cite{fujimoto2019off,agarwal2020optimistic}. 

However, there is a problem called distributional shifts when RL algorithms are directly used in offline scenarios. It is mainly attributed to the existence of actions or states that are not covered by the static dataset during training. This type of state and action is called out-of-distribution (OOD) state and action. 
In offline RL, function approximators are usually used to approximate the state-action value Q or the state value V. Although function approximators can approximate the Q-value or V-value of data in the static dataset accurately, for the states or actions that do not exist in the static dataset, function approximators may not be able to obtain accurate estimates.~\cite{fujimoto2019off}. This may cause the agent to choose some suboptimal actions with overestimated Q-values or V-values, ultimately resulting in a poor policy through training. 

Common-used methods to deal with OOD state-actions can be divided into policy constraint, uncertainty estimation, model-based, regularization, and importance sampling~\cite{prudencio2023survey,fujimoto2021minimalist,schulman2015trust,an2021uncertainty}. 

Algorithms using the above methods can effectively solve the OOD problem. However, most current algorithms only treat static datasets as environments and do not consider further exploitation of other information already available in static datasets. For example, in Q-value constrained algorithms, the most popular method currently is to use the ensemble method to generate multiple Q-value approximators to estimate the Q-value and obtain a conservative Q-value~\cite{kurutach2018model}. This type of method does not further utilize the data in the static dataset, which greatly increases the computation time and space required for training. We believe that further exploration of information in static datasets can further improve the performance of algorithms. To test this assumption, we consider the count-based method. The count-based method is another method to constrain the Q-value. Although the count-based method is not very effective in general RL due to its low exploration efficiency, it has the characteristic of efficiently utilizing prior information~\cite{osband2018randomized}. Therefore, we believe that the count-based method has the potential to improve algorithm performance in offline RL by efficiently utilizing data from static datasets.

In discrete space, the count-based method can directly count the number of state-action pairs selected due to the limited number of state actions. But in continuous space, intuitively counting becomes impossible due to the infinite possible state-action pairs. Thus, in continuous offline scenarios, automatic encoders are generally used to process state-action pairs and count the processed state-action pairs. In continuous offline scenarios, the binary state-action pairs processed by an auto-encoder are counted to constrain the Q-value. Although good results can achieved by using an auto-encoder, it is unable to theoretically justify the uncertainty constraints obtained, and the computational cost may increase due to the auto-encoder. To address these challenges, we propose the Grid-Mapping Pseudo-Count method(GPC) and in this article, our contributions are as follows: We first propose the GPC method, GPC maps continuous state and action spaces to grid spaces using information from the static dataset and uses the pseudo-count to constrain the Q-value of state-action pairs in the grid space. The learned policy is used by GPC to collect OOD state-action samples and OOD state-actions are constrained by the uncertainty quantified by GPC. Secondly, We theoretically prove that GPC can approximate the true uncertainty with fewer assumptions compared to other algorithms that use the count-based method. Thirdly, we propose the Grid-Mapping Pseudo-Count Soft Actor-Critic algorithm(GPC-SAC) by combining GPC with the Soft Actor-Critic(SAC) algorithm. Finally, we experimentally demonstrated the effectiveness of GPC-SAC in D4RL~\cite{fu2020d4rl}.

\section{Related Work}

In offline RL algorithms using the count-based method, the idea of uncertainty estimation is mainly used to estimate uncertainty~\cite{gal2016improving,kidambi2020morel,lee2021sunrise}, and then the quantified uncertainty is used to constrain the Q-value ~\cite{agarwal2020optimistic,prudencio2023survey,mao2024supported,ma2024mutual}. 

The count-based method was first used in combination with the model-based method in RL~\cite{hoel2020reinforcement,clements2019estimating}. After that, count-based methods have been extensively studied in off-policy RL. For count-based methods, the novelty of a state is measured by the number of visits, then a bonus is assigned accordingly. Count-based exploration bonus~\cite{bellemare2016unifying} refers to UCB, a traditional RL method for exploration to DRL, where the counting of states is regarded as an indicator of uncertainty and then is used as an intrinsic motivator to encourage exploration. In DQN-PixelCNN~\cite{ostrovski2017count} PixelCNN~\cite{van2016conditional} is used to measure the probability density of the current state, reconstruct the form of intrinsic motivation, and multi-step updates are used to achieve better results. Domain-dependent learned hash code~\cite{tang2017exploration} further simplifies the count-based method and extends it to continuous RL by using an autoencoder to represent the hash function, mapping the state to a low dimensional feature space, and counting in the feature space. The current count-based methods in RL mainly refer to the structure in this article. In subsequent related research, in A2C+CoEX~\cite{choi2018contingency}, action-related regions are extracted from original image observations and the count of each action-state pair are used to calculate intrinsic motivation. One inherent problem of count-based methods is that the next step is explored based on the current state-actions in count-based methods. In off-policy RL, there may be significant differences between estimated state-actions and actual state-actions, leading to agents being unable to effectively explore~\cite{osband2018randomized}. 

 In offline RL, agents are required to learn and further explore information from static datasets, and the effect of count-based method is better when the Q-value estimation of it is better. Therefore, we believe that the performance of count-based methods can be better in offline RL. There are currently some research results on the use of count-based in offline RL. TD3-CVAE~\cite{rezaeifar2022offline} instantiated with a bonus based on the prediction error of a variational autoencoder and studied the problem of continuous fields. However, it could not reach state-of-the-art at that time. Using count-based methods in discrete fields, CCVL~\cite{hong2022confidence} achieves state-of-the-art in discrete space, but it cannot be applied in continuous fields. Combines TD3-CVAE with model-based methods, count-MORL~\cite{kim2023model} achieving SOTA in the model-based field. However, the use of networks in both state-action modeling and counting increases the complexity of the algorithm. Hence, the research on count-based methods in offline RL is valuable.

\section{Preliminaries}

\textbf{Reinforcement learning}: In RL, the environment is generally represented by an episodic Markov Decision Process(MDP) defined as $(S,A,r,P,{\rho _{0}},\gamma )$~\cite{fujimoto2019off}. Here, $S$ is the state space, $A$ is the action space, $r:S \times A \to [R_{\min },{R_{\max }}]$ is the reward function, $P:S \times A \to S'$ is the transition dynamics. By $P$, the agent takes action to transition to a new state in the current state, $\rho _0 \in S$ is the initial state distribution and $\gamma \in (0,1]$ is the discount factor, which is used to adjust the impact of state-actions over time. 

In this context, RL discusses how an agent finds an optimal policy $\pi:S \to A$, where the objective is to maximize the expected return over a finite time horizon $T$, given by $\max{E_{a_{i}\sim \pi(a_{i}|s_{i})}[\sum_{i=0}^{T-1} \gamma^{i} r(s_{i},a_{i})]}$. Here, ${a_{i}\sim \pi(a_{i}|s_{i})}$ represents that $a_{i}$ follows the distribution $\pi(a_{i}|s_{i})$. Q-learning is one of the main ways to find the optimal policy~\cite{haarnoja2018soft}. Q-learning iteratively selects actions based on the current state to maximize the obtained rewards until the optimal policy is achieved. The Q-function corresponding to the optimal policy is learned by satisfying the following Bellman operator $B$:
\begin{equation}
\begin{array}{l}
    BQ_{\phi}(s_i,a_i) = r(s_i,a_i)+ \\ \gamma E_{s_{i+1}\sim P(s_{i+1}|s_i,a_i),a_{i+1}\sim \pi(a_{i+1}|s_{i+1})}[Q_{\phi^-} (s_{i+1},a_{i+1})]
\end{array}
\end{equation}
Where $\phi$ is the parameter of the Q-network. The Q-function is updated by minimizing the TD-error $E_{(s,a,r,s')}[(Q_\phi-BQ_\phi)^2]$. The target-Q $BQ_{\phi}(s_i,a_i)$is usually computed by a separate target network which is parameterized by ${\phi ^ - }$~\cite{mnih2015human}. 

\textbf{Offline Reinforcement learning}: In offline RL, a policy is no longer learned by interacting with the environment, but a policy instead tries to be learned from a static dataset $D = \{ ({s_i},{a_i},{r_i},s{'_i})\} _{i = 1}^n$, where the data can be heterogenous and suboptimal. 
In offline RL, the distribution shift phenomenon will occur when applying the naive RL algorithm, due to the different distribution of behavior policy and learning policy. Consequently, the Q-value of OOD state-action pairs will be overestimated and then the suboptimal ones will be wrongly selected. Moreover, this mistake cannot be corrected by interaction with the environment in offline RL, thus the error will be propagated through the Bellman operator and a poor training result will be leaded~\cite{kumar2019stabilizing}. 

\textbf{Count-based method}: The count-based method generally constrains the state-action value by counting the number of times a state-action is selected, in discrete space, the most basic constraint method is $B{Q_\phi }(s^{ood},a^{ood}) = {Q_\phi }(s^{ood},a^{ood}) - \frac{1}{{\sqrt {n(s^{ood},a^{ood})} }}$. Here $n(s^{ood},a^{ood})$ is the number of times an OOD state-action has been selected. This decreasing constraint can be understood as the more times a state action pair is selected, the more accurate the estimation of the Q-value can be. Therefore, after a sufficient number of visits, it is no longer necessary to restrict the Q-value too much. and such a simple constraint can achieve good results. In continuous space, $n'(s^{ood},a^{ood})$ are generally used in place of $n(s^{ood},a^{ood})$ can also give good results, here $n'$ is the pseudo-count of state-actions $(s^{ood},a^{ood})$ .

 \section{Grid-Mapping Uncertainty For Offline RL}
 
In this section, we propose a new pseudo-counting method called the Grid-Mapping method and we combine it with pseudo-counting to obtain a count-based method called the Grid-Mapping Pseudo-Count method(GPC). We then show from a theoretical point of view that GPC can correctly constrain OOD state-action pairs.
Finally, we propose the GPC-SAC algorithm by combining GPC with SAC\cite{haarnoja2018soft} framework.

\subsection{Grid-Mapping Pseudo-Count Method}
\label{4.1}
Since it is not possible to count all state-actions directly in a continuous environment, we use the prior information of states and actions in the static dataset to map the state-action space and count the mapped state-actions. In detail, in the Grid-Mapping method, each dimension of the state-action space is gridded respectively, we map state-actions to a discrete space by the maximum and minimum values of states and actions in each dimension. The operation of gridding the $i$-th dimensional of the action or the $j$-th dimensional of the state is formulated as follows: 
\begin{equation}
\begin{array}{l}
s'_{i} = ({\alpha}({s_i} - s_i^{\min }))\bmod (m(s_i^{\max } - s_i^{\min }))\\
a'_{j} = ({\alpha}({a_j} - a_j^{\min }))\bmod (m(a_j^{\max } - a_j^{\min }))
\end{array}
\label{gridOpertor}
\end{equation}
Where $ s_i^{min}$ and $s_i^{max}$ are respectively the minimum and maximum values of the $i$-th dimension of states in the static dataset; $a_{j}^{min}$ and $ a_{j}^{max}$ are respectively the minimum and maximum values of the $j$-th dimension of actions in the static dataset. $\alpha$ is the number of partitions of the state space and action space in the static dataset. In addition, to prevent misclassifying some of the OOD state-actions as non-OOD state-actions, $m \ge 1$ can be used to adjust the influence of OOD state-actions. By Eq\eqref{gridOpertor}, any state-action $(s, a)$ can converted to another corresponding state-action $(s', a')$. For convenience, $(s',a')$ is then converted into an integer $v(s',a')$ as follows:
\begin{equation}
\label{3}
\begin{aligned}
   v(s',a')=& \sum^{n_{1}-1}_{i=0} {s'_{i} \times (\alpha^{i}}+1) + \sum^{n_{2}-1}_{j=0} a'_{j} \times (\alpha^{(n_{1}-1)+j}+1)
\end{aligned}
\end{equation}
Where $n_{1}$ and $n_{2}$ are respectively the dimensions of state space and action space. In this way, counting $v(s',a')$ is equivalent to counting the state-action pair $(s,a)$ corresponding to $(s',a')$. The entire process of obtaining pseudo-counting through GPC can be seen in Figure \ref{figure01}.

\begin{figure*}
    \centering
    \includegraphics[width=0.99\linewidth]{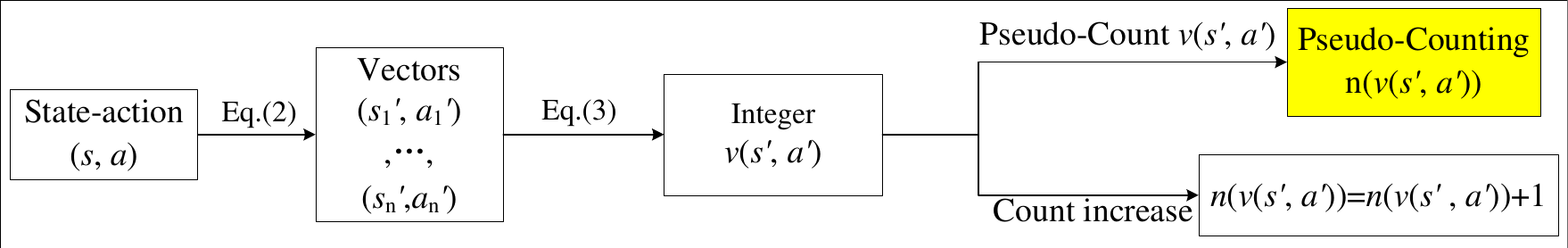}
    \caption{obtaining pseudo-counting through GPC}
    \label{figure01}
\end{figure*}  
Since the original data distribution is retained by GPC, the distributional features of uncertainty in the state-action space can be preserved by GPC and the uncertainty of different state-actions can be accurately quantified by GPC. Moreover, due to this feature, the ability of GPC to obtain an approximation of the true state-action uncertainty can be theoretically justified in the case of continuous state-action spaces. 
Notably, the states used in the training process of most current algorithms are all existing states in the static dataset, thus it is feasible to directly count all possible states in the static dataset instead of gridding the state. 

\subsection{Agent Learning With Uncertainty} 
\label{4.3}
After the pseudo-count, $n'(v(s,a))$ was obtained, we show how $n'(v(s,a))$ can be used to constrain the Q-value of the out-of-distribution state pairs.

First, $n'(v(s,a))$ is used to calculate $u(s,a)$ through $u(s,a) = \alpha \sqrt {\frac{{\ln T}}{{n'(v(s,a))}}} $. Where $\alpha$ is the hyperparameter used to scale the uncertainties, $T$ is the training episode. Then, $u(s,a)$ is used to update the Q-function by constraint OOD samples. OOD samples are sate-actions used in the training process but are not covered by the static dataset. Then, inaccurate Q-value estimations may given by the Q-function approximator for these samples. Sub-optimal samples may be utilized by agents due to inaccurate estimations and this type of use of sub-optimal samples may lead to poor results for the obtained policy. Due to this, the quantified uncertainty is used to constrain them. Since we consider the Q-values for OOD state-actions and in-distribution state-actions separately, the Q-value loss function is divided into two parts: in-distribution loss and OOD loss. The in-distribution loss is the conventional TD-error determined by the Bellman operator, which has the same form as other algorithms can be obtained as follows:
\begin{equation}
   B^{ in} Q^{ k}_{\phi} (s_{ in},a_{ in}) =  r(s_{ in},a_{ in}) + \gamma \hat{E}_{a_{ next} \sim \pi (.|s)}[Q^{ k}_{\phi^-} (s^{ next}_{ in},a_{ next})] 
\end{equation}
\begin{equation}
     {L_{ in}} = {{\hat E}_{({s_{ in}},{a_{ in}},r,s_{ in}^{ next})\sim {D_{ in}}}}[{(({B^{ in}}Q_\phi ^k({{\mathop{ s}\nolimits} _{\rm in}},a_{ in}) - Q_{ \phi} ^{ k}({s_{ in}},a_{ in}))^2}]
\label{inloss}
\end{equation}
Where $Q^{ k}_{\phi^-}$ is the $k$-th target Q-network, $Q^{k}_{\phi}$ is the corresponding result of the $k$-th Q-network, $\phi^-$, $\phi$ denote the parameters. $(s_{ in},a_{ in})$ is state-action already in the static dataset, $(s^{ next}_{ in},a^{ next}$) is the next state after $(s_{ in},a_{ in})$ and the action $a_{ next}$ satisfies $a_{ next} \sim \pi$.

One of the problems in GPC is how to get OOD state-action pairs and constraint them, here we use the actions $a^{ ood}_{ in}$, $a^{ ood}_{ next}$ following policy under states $s_{ in}$ and $s_{ next}$, and constrain $(s_{ in}, a^{ ood}_{ in})$ and $(s_{ next}, a^{ ood}_{ next})$ as OOD state actions.
Similar to $B^{ in} Q^{ k}_{\phi} (s_{ in},a_{ in})$, the target Q-values of OOD samples are formulated as follows. 
\begin{equation}
    B^{ood}Q^{k}_{\phi}(s_{in}, a^{ood}_{in}) =  Q^{k}_{\phi}(s_{in}, a^{ood}_{in}) -\beta u(s_{in}, a^{ood}_{in})
\end{equation}
\begin{equation}
    B^{ood}Q^{k}_{\phi}(s_{next}, a^{ood}_{next})  =  Q^{k}_{\phi}(s_{next}, a^{ ood}_{next}) -\beta_{next}  u(s_{next}, a^{ood}_{next})
\label{oodloss}
\end{equation}
In this equation, $\beta$ and $\beta_{next}$ are the hyperparameters adjusting the degree of constraint. Here ${(s_{in} , a^{ood}_{in})}$ and ${(s_{in} , a_{in})}$ have the same state, but $a_{in}$ and $a^{ood}_{in}$ may be different. 

And for $L_{ood}$, we directly add the above two parts linearly to get $L_{ood}$:
\begin{equation}
    \begin{array}{c}
L_{ood} = {{\hat E}_{s\sim {D_{ in}},{a_{ in}^{ ood}}\sim \pi }}[{({B^{ ood}}Q_\phi ^k({s_{ in}},a_{ in}^{ ood}) - Q_\phi ^k({s_{ in}},a_{in}^{ood}))^2}\\
 + {({B^{ ood}}Q_\phi ^k({s_{ next}},a_{ next}^{ood}) - Q_\phi ^k({s_{ next}},a_{ next}^{ ood}))^2}]
\end{array}
\end{equation}

Finally, the loss function $L_{ total}$ is expressed as the sum of the OOD loss and the in-distribution loss as follows:
\begin{equation}
    L_{total} = L_{in}+L_{ ood}
\end{equation}

In order to better utilize the constrained OOD state-action pairs to assist exploration, we update the policy using the following approach~\cite{haarnoja2018soft}:
\begin{equation}
\label{policyupdt}
{\min_{k}} {\nabla _{\theta} } Q_{\phi}^{k}({s_{ in}},{a^{ ood}_{ in}}) - \psi  \log{\pi_{\theta}}(a^{ ood}_{ in}|{s_{ in}})
\end{equation}
Here $\psi$ is a hyperparameter, $\theta$ is the parameter of policy $\pi$. Exploration as the way above makes use of OOD state-action pairs therefore able to take advantage of the ability of GPC to impose reasonable constraints on OOD state-actions.

The proposed algorithm is trained by updating the Q-function and the policy by the above methods.
In the early stage of training, large uncertainty makes the estimate of the Q-value of the OOD sample pessimistic, thus the existing state-action pairs are more inclined to be selected by the agent. In the middle and later stages of training, as the uncertainty decreases, the samples that are outside the dataset but may perform well are likely to be chosen by the policy, leading to a balance between conservatism and exploration.

\subsection{Connection Between Pseudo-Count And Uncertainty }
\label{theory_proofs}
For continuous space, it is impossible to obtain the uncertainty constraint directly by counting the selection times of a certain state-action. Therefore, the pseudo-count $n'(s,a)$ is generally used to approximate a real count $n(s,a)$ to obtain the uncertainty constraint. However, it is difficult to prove whether the pseudo-counting can accurately respond to the degree of uncertainty about a state-action. But GPC proposed in this paper can guarantee that the pseudo-counts obtained by it can correctly represent the uncertainty of a state-action under a small number of assumptions. Here is a brief proof of the conclusion.

\begin{lemma} 
\label{co1}
When selected appropriate hyperparameter $\alpha$, $u(s,a) = \alpha \sqrt {\frac{{\ln{T}}}{{n(s,a)}}}$ is a suitable uncertain constraint in discrete offline RL.
\end{lemma}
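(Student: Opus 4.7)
The plan is to treat this as a standard Hoeffding-style concentration argument specialized to the pessimistic offline setting. First I would observe that since $r(s,a)\in[R_{\min},R_{\max}]$ and $\gamma\in(0,1]$, the random variable that an empirical Bellman backup averages is bounded in an interval of width $R=\frac{R_{\max}-R_{\min}}{1-\gamma}$. Therefore, for any fixed $(s,a)$ visited $n(s,a)$ times in the static dataset, the empirical Bellman target $\widehat{BQ}(s,a)$ is an average of $n(s,a)$ independent bounded samples of the true target $BQ(s,a)$.

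Second, I would apply Hoeffding's inequality to obtain
\begin{equation*}
\Pr\!\Bigl(\bigl|\widehat{BQ}(s,a)-BQ(s,a)\bigr|\ge \varepsilon\Bigr)\le 2\exp\!\Bigl(-\tfrac{2\,n(s,a)\,\varepsilon^{2}}{R^{2}}\Bigr).
\end{equation*}
Setting $\varepsilon=\alpha\sqrt{\ln T/n(s,a)}$ turns the right-hand side into $2T^{-2\alpha^{2}/R^{2}}$. A union bound over the finite state-action space (feasible because we are in the discrete regime) and over the $T$ training episodes then shows that, once $\alpha$ is chosen large enough that $2\alpha^{2}/R^{2}$ exceeds the logarithmic combinatorial factor $\log(|S||A|)/\log T + c$, the event
\begin{equation*}
BQ(s,a)-u(s,a)\le \widehat{BQ}(s,a)\le BQ(s,a)+u(s,a)
\end{equation*}
holds simultaneously for every visited $(s,a)$ and every episode with probability at least $1-\delta$ for any prescribed small $\delta$.

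Third, to interpret this as a \emph{suitable} constraint for offline RL, I would argue that subtracting $u(s,a)$ from the empirical target yields a high-probability lower bound on $BQ(s,a)$, realising the pessimism principle that underlies count-based offline algorithms: states with large $n(s,a)$ incur a vanishing penalty (so well-supported actions are trusted), while states with small $n(s,a)$ incur a large penalty (so overestimation of OOD candidates is suppressed). The $\sqrt{\ln T}$ factor supplies exactly the slack needed for the union bound across the training horizon, mirroring the classical UCB derivation.

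The main obstacle I expect is not the single-step concentration bound, which is textbook, but rather the propagation of the guarantee through iterated Bellman updates: the target network $Q_{\phi^-}$ is itself a learned estimate, so the samples averaged by $\widehat{BQ}$ are only approximately i.i.d.\ around $BQ$. Handling this rigorously calls for a recursive argument, where one first establishes the concentration bound conditionally on the target-network error, then absorbs that error into the $\alpha$ constant by enlarging it by a geometric factor tied to $\gamma$. Once this recursive inflation of $\alpha$ is carried out, the stated form $u(s,a)=\alpha\sqrt{\ln T/n(s,a)}$ is preserved and the lemma follows.
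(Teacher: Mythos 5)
Your proposal follows essentially the same route as the paper's own argument: both apply Hoeffding's inequality to the empirical Bellman/Q estimate and recover the $\sqrt{\ln T / n(s,a)}$ form by choosing the per-pair failure probability to scale as an inverse polynomial in $T$ (the paper takes $p = 1/T^2$, which is exactly your choice of exponent up to the constant absorbed into $\alpha$). Your version is in fact more careful than the paper's --- you make the boundedness range $R$ explicit, add the union bound over the finite state-action space and the horizon, and flag the non-i.i.d.\ issue arising from the bootstrapped target network, none of which the paper addresses --- but the underlying idea is identical.
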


Lemma \ref{co1} can obtained by Hoeffding's inequality directly~\cite{hoeffding1994probability}. If you are interested, the proof of the lemma is in our appendix.

From the above Lemma it can be inferred that in discrete space, appropriate uncertainty constraints can be obtained by GPC. Then, the results of discrete spaces can be extended to continuous spaces. 
For continuous spaces, we consider epistemic uncertainty. So we prove that an approximation of the epistemic uncertainty can be obtained by GPC. Firstly, we get an approximation of the epistemic uncertainty constraint in another way. Secondly, we prove that the uncertainty obtained by GPC is equivalent to the epistemic uncertainty constraint. 
Referring to the work of pioneers, an approximation of the uncertainty constraint can be obtained by the following process~\cite{wang2020reward}.
\begin{definition}
\label{linearmdp}
An MDP is linear if it satisfies that the state transition function and reward function are linear to the transition kernel function $\varphi :S \times A \to {R^d}$.
\end{definition}

Although the linear hypothesis seems very strict, actually as the expression of the transition kernel of state and action is unlimited, the assumption of linear MDP has a powerful expression ability.

\begin{definition}
\label{uncer}
${\it u}$ is a suitable epistemic uncertainty function if it satisfies the following equation:
\begin{equation}
\label{uncertaintyfunc}
\begin{array}{c}
\forall (s,a) \in S \times A\\
P\left( {\left| {\hat B{V_{t + 1}}(s,a) - B{V_{t + 1}}(s,a)} \right| \le u(s,a)} \right) \ge 1 - \xi 
\end{array}    
\end{equation}
Where ${\it B}$ is the Bellman equation, $\hat B$ is the empirical Bellman equation estimated through the static datasets~\cite{wang2020reward}, $\xi$ can be any number in $(0,1]$.
\end{definition}

Definition \ref{uncer} is just a common definition for determining whether uncertainty is appropriate. From Definition \ref{linearmdp} and Definition \ref{uncer}, a quantification of epistemic uncertainty can obtained.
\begin{lemma}
\label{lemma2}
In linear MDPs, the reward function and transition kernel assumptions are linear to the representation of state-actions $\varphi :S \times A \to {R^d}$. When selecting appropriate hyperparameter $b_t$, the following lower confidence limit penalty is an appropriate uncertainty quantification~\cite{jin2020provably,abbasi2011improved}:
\begin{equation}
{\Gamma ^{\rm lcb}}(s_{ t},a_{ t}) = {b _ t} \cdot {[\varphi {({s_ t},{a_ t})^T}\Lambda _t^{ - 1}\varphi ({s_ t},{a_  t})]^{\frac{1}{2}}}
\label{lcb}
\end{equation}
where ${\Lambda _t} = \sum\nolimits_{i = 1}^m {\varphi (s_t^i,a_t^i)} \varphi {(s_t^i,a_t^i)^T} + \lambda  \cdot I$, $m$ is the sum of all state-action pairs in the dataset.
\end{lemma}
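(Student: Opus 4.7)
The plan is to leverage the linear structure guaranteed by Definition \ref{linearmdp} so that the Bellman evaluation of any value function admits a linear-in-$\varphi$ parametrization, and then to apply a self-normalized concentration inequality of the Abbasi-Yadkori type to control the deviation between the empirical and the true Bellman targets.

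First I would spell out the linear decomposition. Under Definition \ref{linearmdp} there exist $\theta_r \in R^d$ and a signed measure $\mu$ on $S$ such that $r(s,a) = \varphi(s,a)^T \theta_r$ and $P(\cdot \mid s,a) = \varphi(s,a)^T \mu(\cdot)$. For any fixed value function $V_{t+1}$ this forces $BV_{t+1}(s,a) = \varphi(s,a)^T w^{*}$ with $w^{*} = \theta_r + \gamma \int V_{t+1}(s')\, d\mu(s')$. The empirical Bellman target $\hat{B}V_{t+1}(s,a) = \varphi(s,a)^T \hat{w}$ can then be identified with the ridge-regression estimator on the offline dataset, namely $\hat{w} = \Lambda_t^{-1} \sum_{i=1}^{m} \varphi(s_t^i, a_t^i)\, y_t^i$ with targets $y_t^i = r_i + \gamma V_{t+1}(s_{t+1}^i)$; the regularizer $\lambda I$ in $\Lambda_t$ is exactly what makes $\hat{w}$ well defined when the offline coverage is deficient.

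Next I would bound the pointwise error. Writing $\hat{B}V_{t+1}(s,a) - BV_{t+1}(s,a) = \varphi(s,a)^T (\hat{w} - w^{*})$ and applying Cauchy-Schwarz in the $\Lambda_t$-geometry gives
\begin{equation*}
|\hat{B}V_{t+1}(s,a) - BV_{t+1}(s,a)| \le \|\varphi(s,a)\|_{\Lambda_t^{-1}} \cdot \|\hat{w} - w^{*}\|_{\Lambda_t},
\end{equation*}
in which the first factor is precisely $[\varphi(s,a)^T \Lambda_t^{-1} \varphi(s,a)]^{1/2}$. It therefore only remains to choose $b_t$ so that $\|\hat{w} - w^{*}\|_{\Lambda_t} \le b_t$ holds with probability at least $1-\xi$. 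This is exactly the conclusion of the self-normalized martingale bound of \cite{abbasi2011improved}: under sub-Gaussian noise in the Bellman residuals, a $b_t$ of order $\sqrt{d \log((1 + m/\lambda)/\xi)}$ plus a regularization bias term suffices, and the resulting inequality matches Eq.~\eqref{lcb}.

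The main obstacle will be handling the statistical dependence introduced because $V_{t+1}$ is itself learned from the same offline dataset, so the targets $y_t^i$ are not a fixed-design regression and a naive application of the self-normalized inequality at a single $V$ is not sufficient. The standard remedy, which I would adopt, is a covering argument over the class of admissible value functions (bounded by $R_{\max}/(1-\gamma)$ under the assumptions of Definition \ref{linearmdp}): apply the self-normalized inequality at each point of a small $\varepsilon$-net in the sup-norm, union-bound across the net, and absorb the uniform discretization error into $b_t$. After this inflation of $b_t$, the inequality \eqref{uncertaintyfunc} holds simultaneously for all $(s,a) \in S \times A$ with probability at least $1-\xi$, which is precisely the claim of Lemma~\ref{lemma2}.
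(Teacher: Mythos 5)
Your proposal is correct, and it is the standard frequentist argument from the works the lemma cites; but it is not the route the paper itself takes. You decompose the error as $\varphi(s,a)^{T}(\hat w - w^{*})$, apply Cauchy--Schwarz in the $\Lambda_t$-weighted norm to peel off the factor $[\varphi^{T}\Lambda_t^{-1}\varphi]^{1/2}$, and then control $\|\hat w - w^{*}\|_{\Lambda_t}$ by the self-normalized martingale inequality together with a covering argument over the class of value functions; this directly delivers the high-probability deviation bound demanded by Definition~\ref{uncer}, under only sub-Gaussian Bellman residuals, and you correctly flag the data-dependence of $V_{t+1}$ as the point requiring the union bound over an $\varepsilon$-net. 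The paper instead argues in two other ways in its appendix: first, a tabular sanity check in which $\varphi$ is taken to be a scaled indicator vector so that $\Lambda_t$ becomes diagonal with entries $\lambda + n_i$ and $\Gamma^{\rm lcb}$ collapses to the count bonus $\sqrt{\ln T/(n_i+\lambda)}$; second, a Bayesian linear-regression computation under Assumption~A.1 (Gaussian prior $\omega_t\sim N(0,I\lambda)$ and Gaussian residual $\varepsilon\sim N(0,1)$) showing the posterior covariance of $\hat\omega_t$ is $\Lambda_t^{-1}$, hence $\mathrm{Var}(Q^{\pi}(s,a))\approx \Gamma^{\rm lcb}(s,a)^2$. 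Your approach buys a bound that is literally in the form of Eq.~\eqref{uncertaintyfunc} without any normality assumption, at the cost of the heavier covering machinery and a $b_t$ inflated by the log covering number; the paper's approach buys the explicit reduction of $\Gamma^{\rm lcb}$ to visitation counts (which is what the subsequent GPC argument actually consumes) and a simple variance identity, at the cost of the Gaussianity assumption and of establishing only an approximate variance match rather than the stated tail inequality. Either argument supports the lemma as used downstream.
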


By substituting Eq\eqref{lcb} into Eq\eqref{uncertaintyfunc} to get:
\begin{equation}
P\left( {\left| {\hat B{V_{t + 1}}(s,a) - B{V_{t + 1}}(s,a)} \right| \le {\Gamma ^{\rm lcb}}(s,a)} \right) \ge 1 - \xi 
\label{eq13}
\end{equation}
Under the above conditions, the following corollary can be given.

\begin{corollary}
\label{cont}
In linear MDPs, ${\Gamma ^{\rm lcb}}(s,a)$ can be regarded as a continuous function. That is, the uncertainty in continuous offline RL can be approximated by a continuous function.
\end{corollary}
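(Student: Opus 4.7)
The plan is to exhibit ${\Gamma^{\rm lcb}}$ as a composition of continuous maps, so that continuity follows from the standard closure property of continuity under composition. The key observation is that once the static dataset is fixed, the matrix $\Lambda_t = \sum_{i=1}^m \varphi(s_t^i, a_t^i)\varphi(s_t^i, a_t^i)^T + \lambda I$ depends only on the dataset and the regularizer $\lambda$, not on the query point $(s,a)$. Therefore $\Lambda_t$ can be treated as a fixed $d \times d$ matrix throughout the argument, and the only source of variation in $(s,a)$ comes through the feature map $\varphi$.

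First I would verify that $\Lambda_t^{-1}$ exists and is positive definite. The sum $\sum_{i=1}^m \varphi(s_t^i, a_t^i)\varphi(s_t^i, a_t^i)^T$ is positive semidefinite as a sum of rank-one outer products, and adding $\lambda I$ with $\lambda > 0$ makes it positive definite. Hence $\Lambda_t$ is invertible and $\Lambda_t^{-1}$ is also positive definite, so the quadratic form $x \mapsto x^T \Lambda_t^{-1} x$ is non-negative for every $x \in \mathbb{R}^d$, making the square root in Eq.~\eqref{lcb} well defined on all of $S \times A$.

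Second, I would invoke the natural regularity condition on the feature map underlying Definition~\ref{linearmdp}, namely that the kernel $\varphi: S \times A \to \mathbb{R}^d$ is continuous on $S \times A$. This is the standard hypothesis in the linear MDP literature and is the only additional ingredient needed beyond what Lemma~\ref{lemma2} already provides. With this in hand, the map $(s,a) \mapsto \varphi(s,a)^T \Lambda_t^{-1} \varphi(s,a)$ is the composition of the continuous vector-valued map $\varphi$ with the bilinear form determined by the fixed matrix $\Lambda_t^{-1}$, which is polynomial, and hence continuous, in the components of $\varphi(s,a)$. Composing with the square root (continuous on $[0,\infty)$) and scaling by the constant $b_t$ preserves continuity, yielding that $(s,a) \mapsto \Gamma^{\rm lcb}(s,a)$ is a continuous function from $S \times A$ to $\mathbb{R}_{\ge 0}$.

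The main obstacle is not technical but conceptual: the paper never explicitly states that $\varphi$ is continuous, so I would flag this as the essential structural hypothesis on the linear MDP and argue that it is innocuous, since without it the conclusion simply cannot hold as $\Gamma^{\rm lcb}$ inherits no more regularity than $\varphi$ itself. Everything else is routine multivariable analysis. The corollary then serves as the bridge from the discrete Hoeffding-style bound of Lemma~\ref{co1} to a continuous-space counterpart, justifying the subsequent approximation of $\Gamma^{\rm lcb}$ by the grid-mapped pseudo-count quantity $u(s,a)$ used in GPC.
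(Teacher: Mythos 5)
Your proposal is correct and follows essentially the same route as the paper's appendix proof: both arguments rest on the observations that $\Lambda_t$ is fixed once the dataset is fixed and that $\Gamma^{\rm lcb}$ is therefore a composition of the (assumed componentwise continuous) feature map $\varphi$ with a continuous quadratic form, the hypothesis you flag being exactly the one the paper imposes in its appendix theorem ("when the components of each dimension of the kernel function are continuous"). Your version is in fact a little tidier, since you treat $\Lambda_t^{-1}$ abstractly as a fixed positive definite matrix (which also justifies invertibility and the well-definedness of the square root), whereas the paper expands $\Lambda_t^{-1}$ entrywise as rational functions via the adjugate before reaching the same conclusion.
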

Although it is difficult to prove that ${\Gamma ^{\rm lcb}}(s,a)$ is a continuous function for all $\varphi$, it holds true for all the commonly used kernel functions. A detailed description is given in our appendices.

Based on the above conditions, Theorem \ref{the1} is inferred as follows:
\begin{theorem}
\label{the1}
In a continuous linear MDP, when the state space and action space are not unlimited. For the uncertainty constraint ${\it u'}$ obtained by GPC, there exists a parameter ${c}$ such that for any $(s,a)$ has:
\[P\left( {|\hat B{V_{t + 1}}(s,a) - B{V_{t + 1}}(s,a)| \le c u'(s,a)} \right) \ge 1 - \xi \]
\end{theorem}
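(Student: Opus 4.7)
The plan is to reduce the continuous problem to the discrete one already handled by Lemma~\ref{co1}, using GPC's grid structure as the bridge and Corollary~\ref{cont} to control the continuous-to-discrete approximation error. Since $S \times A$ is bounded, GPC partitions it into finitely many cells $\{C_j\}$ whose diameters $\delta_\alpha$ scale as $O(1/\alpha)$; on each cell the pseudo-count $n'(v(s,a))$ is constant, so $u'(s,a)$ may be viewed as the discrete uncertainty of a gridded MDP whose ``state--actions'' are the cells themselves. The target is to show that, up to a multiplicative constant, $u'$ upper-bounds the continuous epistemic error $|\hat B V_{t+1}-BV_{t+1}|$.

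First I would invoke Corollary~\ref{cont} together with the compactness of $S\times A$ to conclude that $\Gamma^{\mathrm{lcb}}$ is uniformly continuous and bounded; let $\omega$ denote its modulus of uniform continuity and $M$ its supremum. Second, I would apply Lemma~\ref{co1} to the gridded MDP: since the cells form a finite discrete state--action set, the constraint $\alpha\sqrt{\ln T/n_j}$ is a suitable uncertainty quantifier at any representative point $(s^\ast,a^\ast)\in C_j$, with probability at least $1-\xi$. Third, for an arbitrary continuous $(s,a)\in C_j$ I would combine Eq.~\eqref{eq13} with uniform continuity to write
\begin{equation*}
|\hat B V_{t+1}(s,a)-BV_{t+1}(s,a)| \le \Gamma^{\mathrm{lcb}}(s,a) \le \Gamma^{\mathrm{lcb}}(s^\ast,a^\ast)+\omega(\delta_\alpha),
\end{equation*}
and then use the discrete bound from step two (noting that $n'(v(s,a))=n'(v(s^\ast,a^\ast))$ by construction) to replace $\Gamma^{\mathrm{lcb}}(s^\ast,a^\ast)$ by $u'(s,a)$, giving the additive estimate $|\hat BV_{t+1}-BV_{t+1}|(s,a)\le u'(s,a)+\omega(\delta_\alpha)$.

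The main obstacle is upgrading this \emph{additive} bound to the \emph{multiplicative} bound $c\,u'(s,a)$ required by the statement, and handling the case of empty cells. Empty cells cause no trouble because GPC assigns them (by convention) a very large value of $u'$ exceeding $M$, so the inequality is automatic. For non-empty cells, since $n'(v(s,a))\le |D|$ one has the positive floor $u'(s,a)\ge \alpha\sqrt{\ln T/|D|}$; choosing $\alpha$ large enough pushes $\omega(\delta_\alpha)$ below $(c-1)\alpha\sqrt{\ln T/|D|}$ for any fixed $c>1$, so the additive error is absorbed into $(c-1)u'(s,a)$ and the theorem follows. The subtle point is making sure $c$ depends only on the fixed MDP data (the modulus $\omega$, the diameter of $S\times A$, and $|D|$) and not on the specific $(s,a)$, which is precisely what uniform continuity on a compact domain provides.
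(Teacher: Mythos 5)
Your overall strategy mirrors the paper's: discretize via the grid, transfer a discrete count-based bound to an arbitrary point of the cell using the continuity of $\Gamma^{\rm lcb}$ from Corollary \ref{cont}, and control the resulting error. Your final step --- converting the additive error $\omega(\delta_\alpha)$ into a multiplicative constant via the uniform floor $u'(s,a)\ge \alpha\sqrt{\ln T/|D|}$ on non-empty cells --- is actually cleaner than what the paper does (the paper's own final display retains an additive $+\varepsilon$ that is never absorbed into $c\,u'$). However, there is a genuine gap at the pivotal step: you replace $\Gamma^{\rm lcb}(s^\ast,a^\ast)$ by $u'(s,a)$ on the grounds that the pseudo-counts agree across the cell and that $\alpha\sqrt{\ln T/n_j}$ is ``a suitable uncertainty quantifier'' at the representative point. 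But two quantities each being valid high-probability upper bounds on the Bellman error does not imply that either dominates the other, so suitability of the cell-level count bound gives you no inequality between $\Gamma^{\rm lcb}(s^\ast,a^\ast)$ and $u'(s,a)$. Moreover, in a continuous space an individual representative point does not carry a well-defined visit count equal to the cell's aggregated pseudo-count, so it is unclear what $n_j$ at $(s^\ast,a^\ast)$ even denotes in your second step.

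The paper bridges exactly this gap with two ingredients you omit: first, the tabular identification $\Gamma^{\rm lcb}(s_i,a_i)=\sqrt{\ln T/(\lambda+n_i)}$ (the tabular lemma in Appendix B), which ties $\Gamma^{\rm lcb}$ to counts rather than merely asserting that both are valid quantifiers; and second, the mean value theorem for integrals applied to the visitation density over the cell, which produces an interior point $(s',a')$ with $c\,n'(s,a)=n(s',a')$, i.e.\ a point whose density-based count matches the cell pseudo-count up to the constant $c$. Continuity of $\Gamma^{\rm lcb}$ then carries the bound from $(s',a')$ to the arbitrary $(s,a)$, exactly as in your Eq.~\eqref{eq13} step. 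If you insert these two ingredients in place of the appeal to ``the discrete bound from step two,'' your argument closes, and your floor-based absorption of $\omega(\delta_\alpha)$ then yields the clean multiplicative form claimed in Theorem \ref{the1}.
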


\begin{proof}
When the state-action space is gridded using the Grid-Mapping method, the number of times that all state-action pairs in the region $[{s_{i - 1}},{s_i}) \times [{a_{j - 1}},{a_j})$ are selected is exploited to approximate the uncertainty of any point $(s,a)$ in it, using $u'$ to represent the uncertainty obtained by this method. From Corollary \ref{co1}, it can be obtained that:
\begin{equation}
\label{eq14}
\forall s,a \in [{s_{i - 1}},{s_i}) \times [{a_{j - 1}},{a_j}) \quad  u'(s,a) = \sqrt {\frac{lnT}{{n'(s,a)}}} 
\end{equation}
where \[n'(s,a) = \int_{s_{i - 1}}^{s_{j}} {\int_{a_{j - 1}}^{a_{j}} {n(s,a)dsda} } \] and because $n(s,a) \ge 0$ at any $(s,a)$, then from the Mean value theorems for definite integrals, it can be obtained that :
\begin{equation}
c n'(s,a) = n(s',a')
\end{equation}
where $(s',a')$ is a point in the interval $[{s_{i - 1}},{s_i}) \times [{a_{j - 1}},{a_j})$. Then, by substituting Eq\eqref{eq14} into Eq\eqref{eq13}, it can be inferred that  
\begin{equation}
\label{eq16}
 c u'(s,a) = u(s',a')
\end{equation}
That is, the uncertainty $u(s',a')$ is equal to the estimated uncertainty $u(s,a)$ obtained by the interval $[{s_{i - 1}},{s_i}) \times [{a_{j - 1}},{a_j})$ where $(s',a')$ is located in.
Since $u'$ is continuous, when the interval is taken as small enough, it can be inferred that:
\begin{equation}
\begin{array}{c}
\forall \varepsilon  > 0,\exists {\varepsilon’}>0\\ while \left| {({s_i},{a_j}) - ({s_i},{a_{j - 1}})} \right| +  \left| {({s_{i}},{a_j}) - ({s_{i-1}},{a_j})} \right| < {\varepsilon’}\\
\left| {u(s,a) -  c  u'(s',a')} \right| < \varepsilon 
\end{array}
\end{equation}
Therefore, for the uncertainty in any $(s,a)$, it can be estimated using the corresponding $c u'(s',a')$.
Finally, since ${\Gamma ^{\rm lcb}}(s,a)$ is proved to be continuous and an appropriate uncertainty quantization, it can replace $u(s,a)$ above. Substitute Eq\eqref{eq16} into Eq\eqref{lcb} to obtain:
\begin{equation}
P\left( {|\hat B{V_{t + 1}}(s,a) - B{V_{t + 1}}(s,a)| \le  c  u'(s,a) + \varepsilon } \right) \ge 1 - \xi 
\end{equation}
That is, an appropriate uncertainty constraint can be obtained by GPC.
\end{proof}
Theorem \ref{the1} shows that the uncertainty results obtained in \ref{4.1} can accurately reflect the real uncertainty, which provides a theoretical guarantee for the results. Compared with other count-based algorithms, the theoretical support of GPC is more intuitive and simple.

\subsection{GPC-SAC Algorithm}
\begin{algorithm}[tb]

    \caption{GPC-SAC algorithm}
    \label{alg1}
    \textbf{Input}: Initial policy parameter ${\it \theta}$; Q-function parameters $\phi_{1}$,$\phi_{2}$; Target Q-function parameters$\phi_{1}^{-}$, $\phi_{2}^{-}$; offline dataset ${\it D}$; The number of times the data is selected ${\it n(s,a)}$.\\
    \textbf{Output}: Trained policy 
    \begin{algorithmic}[1] 
        \WHILE{total iteration number ${\le}$ maximum iteration number}
	   \WHILE{total step number ${\le}$ maximum step number}
        \STATE Sample a mini-batch of $k$ state-action pairs $\left\{(s^t_{\rm in} ,a^t_{\rm in}),t=1,..,k \right\}$ from ${D}$
	   \STATE Calculate the OOD state-action pairs $(s^t_{\rm ood} ,a^t_{\rm ood})$
        \STATE Calculate the count-based uncertainty $u(s^t_{\rm ood},a^t_{\rm ood})$ according to Eq\eqref{gridOpertor}
        \STATE Calculate the Q-target according to Eq\eqref{inloss}, Eq\eqref{oodloss}
        \STATE Update Q-function with gradient descent as follows:
        
$\begin{array}{l}
\nabla {\phi _i}\frac{1}{{\left| B \right|}}\sum\limits_{t = 1}^k ({{{({Q_{{\phi _i}}}(s_{\rm in}^t,a_{\rm in}^t) - {B^{\rm in}}{Q_{{\phi _i}}}(s_{\rm in}^t,a_{\rm in}^t))}^2}} \\
 + {({Q_{{\phi _i}}}(s_{\rm in}^t,a_{\rm in}^{\rm ood}) - {B^{\rm in}}{Q_{{\phi _i}}}(s_{\rm in}^t,a_{\rm in}^{\rm ood}))^2}\\
 + {({Q_{{\phi _i}}}(s_{\rm next}^t,a_{\rm next}^{\rm ood}) - {B^{\rm ood}}{Q_{{\phi _i}}}(s_{\rm next}^t,a_{\rm next}^{\rm ood}))^2})
\end{array}$
	   \STATE Update policy with gradient ascent with Eq\eqref{policyupdt}
        \STATE Update target networks with:
			$ \phi_i^-=\rho \phi_i^-+(1-\rho)\phi_i  $
        \ENDWHILE
	   \ENDWHILE
        \STATE \textbf{return} solution
    \end{algorithmic}
\end{algorithm}
In order to maximize the power of GPC for OOD state-action constraints, we use the SAC algorithm, which is the most exploratory, as our benchmark framework and propose a new algorithm, GPC-SAC. The specific implementation of the pseudo-code of GPC-SAC is shown as algorithm \ref{alg1}.  

In GPC-SAC, due to the characteristics of the count-based method and deep offline RL, it is possible to have an underestimation of the Q-value of some OOD state-actions in the early training steps, so we use $\max\{B^{ood}Q^{k}_{\phi}(s_{next},a^{ ood}_{ next}),0\}$ and $\max\{B^{ ood}Q^{k}_{\phi}(s_{ in},a^{ ood}_{ in}),0\}$ to enable training to be stabilized at early training stages.

The uncertainty estimates obtained by GPC are used in GPC-SAC to constrain the Q-value and then the constrained Q-value is used to guide the agent to improve its policy. This operation only takes a short time and can improve the performance of SAC. It can also proved that the Q-value of the policy selected by GPC-SAC is singularly increasing, which ensures the performance of GPC-SAC. Since the proof is similar to the proof in sac, we put the proof in the Appendices.

\section{Experiment}
We considered different tasks in the D4RL benchmark~\cite{fu2020d4rl}, including Gym, Maze2d and Adroit. Gym is the most commonly used evaluation criterion task these days, with a relatively smooth reward function. Therefore, Gym is relatively simple compared to other tasks. Maze2d navigation problem using sparse rewards requires the agent to obtain the optimal path from multiple trajectories, making it more challenging. Adroit is mainly used for complex human activities, and the data in the dataset only reflects a small part of the state-actions, so the agent need to learn from limited state-actions.

\subsection{Experiment in Gym}
\subsubsection{Experiment setting}

\begin{table*}[!h]
\centering
\caption{The standardized average performance and standard deviation of each algorithm on the Gym dataset. In the table, half represents halfcheetah; hp represents hopper; w represents walker2d; r represents random; m represents medium and e represents expert. Due to the lack of open source code for TD3-CVAE, the corresponding experimental results are from Table 1 in~\cite{rezaeifar2022offline} with 10 random seeds. Due to the high computational cost of PBRL, PBRL is only tested in partial environments with 5 random seeds, while the rest of the results were obtained from \protect\cite{bai2022pessimistic}. GPC-SAC are tested with 5 random seeds.}
\begin{tabular}{cccccccc}
\toprule
   Env  	 & IQL  & UWAC & CQL & PBRL & TD3-CVAE & \textbf{GPC-SAC} \\
\midrule
       half-r         & $11.0 {\pm} 2.5$    & $2.3 {\pm} 0.0$    & $28.3 {\pm} 0.5$      & $11.0 {\pm} 5.8$    & $28.6 {\pm} 2.0$ & $\textbf{31.0} {\pm} \textbf{1.8}$     \\
        hp-r         & $7.8 {\pm} 0.4$   & $2.7 {\pm} 0.3$     & $16.4 {\pm} 14.5$      & $26.8 {\pm} 9.3$    & $11.7 {\pm} 0.2$ & $\textbf{31.4} {\pm} \textbf{0.0}$     \\
        w-r         & $6.4 {\pm} 0.1$    & $2.0 {\pm} 0.4$    & $4.2 {\pm} 0.4$      & $8.1 {\pm} 4.4$    & $5.5 {\pm} 8.0$ & $\textbf{9.6} {\pm} \textbf{12.0}$     \\
        half-m         & $47.4 {\pm} 0.2$  & $42.2 {\pm} 0.4$     & $47.0 {\pm} 0.5$      & $57.9 {\pm} 1.5$    & $43.2 {\pm} 0.4$ & $\textbf{60.8} {\pm} \textbf{0.7}$     \\
        hp-m              & $66.2 {\pm} 5.7$   & $50.9 {\pm} 4.4$     & $53.0 {\pm} 28.5$     & $75.3 {\pm} 31.2$   & $55.9 {\pm} 11.4$  & $\textbf{82.9} {\pm} {\textbf{2.3}}$    \\
        w-m           & $78.3 \pm 8.7$    & $75.4 \pm 3.0$    & $73.3 \pm 17.7$   & $\textbf{89.6} \pm \textbf{0.7}$  & $68.2 \pm 18.7$    & $87.6 \pm 1.3 $    \\
        half-m-r  & $44.2 \pm 1.2$   & $35.9 \pm 3.7$     & $45.5 \pm 0.7$   & $45.1 \pm 9.8$    & $45.3 \pm 0.4$     & $\textbf{55.7} \pm \textbf{1.0}$    \\
        hp-m-r       & $94.7 \pm 8.6$  & $25.3 \pm 1.7$  & $35.9 \pm 3.7$     & $88.7 \pm 12.9$   & $\textbf{100.6} \pm \textbf{1.0}$     & $97.5 \pm 3.6$    \\
        w-m-r     & $73.8 \pm 7.1$   & $23.6 \pm 6.9$     & $81.8 \pm 2.7$      & $77.7 \pm 14.5$   & $15.4 \pm 7.8$  & $\textbf{86.2} \pm \textbf{2.4}$    \\
        half-m-e  & $86.7 \pm 5.3$     & $42.7 \pm 0.3$   & $75.6 \pm 25.7$     & $\textbf{92.3} \pm \textbf{1.1}$ & $86.1 \pm 9.7$      & $87.1 \pm 5.8$    \\
        hp-m-e       & $91.5 \pm 14.3$   & $44.9 \pm 8.1$    & $105.6 \pm 12.9$    & $110.8 \pm 0.8$   & $\textbf{111.6} \pm \textbf{2.3}$ & $109.8 \pm 3.4$   \\
        w-m-e     & $109.6 \pm 1.0$   & $96.5 \pm 9.1$    & $107.9 \pm 1.6$     & $110.1 \pm 0.3$   & $84.9 \pm 9.7$  & $\textbf{111.7} \pm \textbf{1.0}$    \\
        half-e         & $95.0 \pm 0.5$   & $92.9 \pm 0.6$     & $96.3 \pm 1.3$      & $92.4 \pm 1.7$    & ---         & $\textbf{104.7} \pm \textbf{2.2}$      \\
        hp-e              & $109.4 \pm 0.5$ & $\textbf{110.5} \pm \textbf{0.5}$  & $96.5 \pm 28.0$  & $\textbf{110.5} \pm \textbf{0.4}$  & ---   & $109.5 \pm 4.0$    \\   
        w-e            & $109.9 \pm 1.0$  & $108.4 \pm 0.4$  & $108.5\pm 0.5$      & $108.3 \pm 0.3$   & ---        & $\textbf{111.7} \pm \textbf{1.0}$     \\
        Average                    & $68.8 \pm 3.8$   & $50.4 \pm 2.7$  & $68.6 \pm 9.9$     & $74.4 \pm 5.3$    &  $50.3 \pm 7.4$ & $\textbf{78.5} \pm \textbf{2.8}$      \\
\bottomrule
\end{tabular}
\label{mainres}
\end{table*}

To validate the effectiveness of GPC-SAC, experiments are performed in Gym dataset. Three types of environments are included in Gym dataset: halfcheetah, hopper and walker2d. Random, medium, medium-replay, medium-expert and expert five different policies are used in each type of environment to collect samples. Specifically, random means completely random sampling to obtain data. Medium means training an algorithm online, pausing training midway, and then utilizing this partially trained policy for data collection. Medium-replay means training an algorithm online until the policy reaches a medium performance level and collecting all samples saved in the buffer during training. Medium-expert means mixing the same amount of data by expert policy and medium policy. Expert means training a policy online to expert performance level and using this expert policy for sample collection. In most environments, 1000 episodes are trained and 5 different random seeds are used for testing. For detailed parameter settings, please refer to \ref{implement detail}.

To demonstrate the effectiveness of GPC-SAC, GPC-SAC is compared with several classical algorithms and SOTA algorithms. The involved algorithms are: IQL~\cite{Kostrikov2021OfflineRL}, which proposed an in-sample Q-learning algorithm that does not use OOD samples at all, UWAC~\cite{wu2021uncertainty}, an uncertainty estimation algorithm that improves BEAR~\cite{kumar2019stabilizing} through dropout, CQL~\cite{kumar2020conservative}, an algorithm that constrains the policy by conservatively estimating the Q-value of OOD samples through regularization terms, TD3-CVAE~\cite{rezaeifar2022offline}, another model-free domain algorithm that uses the count-based method to subtract the predicted uncertainty from the reward for conservative learning, and PBRL~\cite{bai2022pessimistic}, an algorithm in the model-free domain that quantifies uncertainty and 
constraints Q-values through the ensemble method.

\begin{figure*}[!htbp]
	\centering
    \subfigure[w-r]{
		\includegraphics[width=0.31\linewidth]{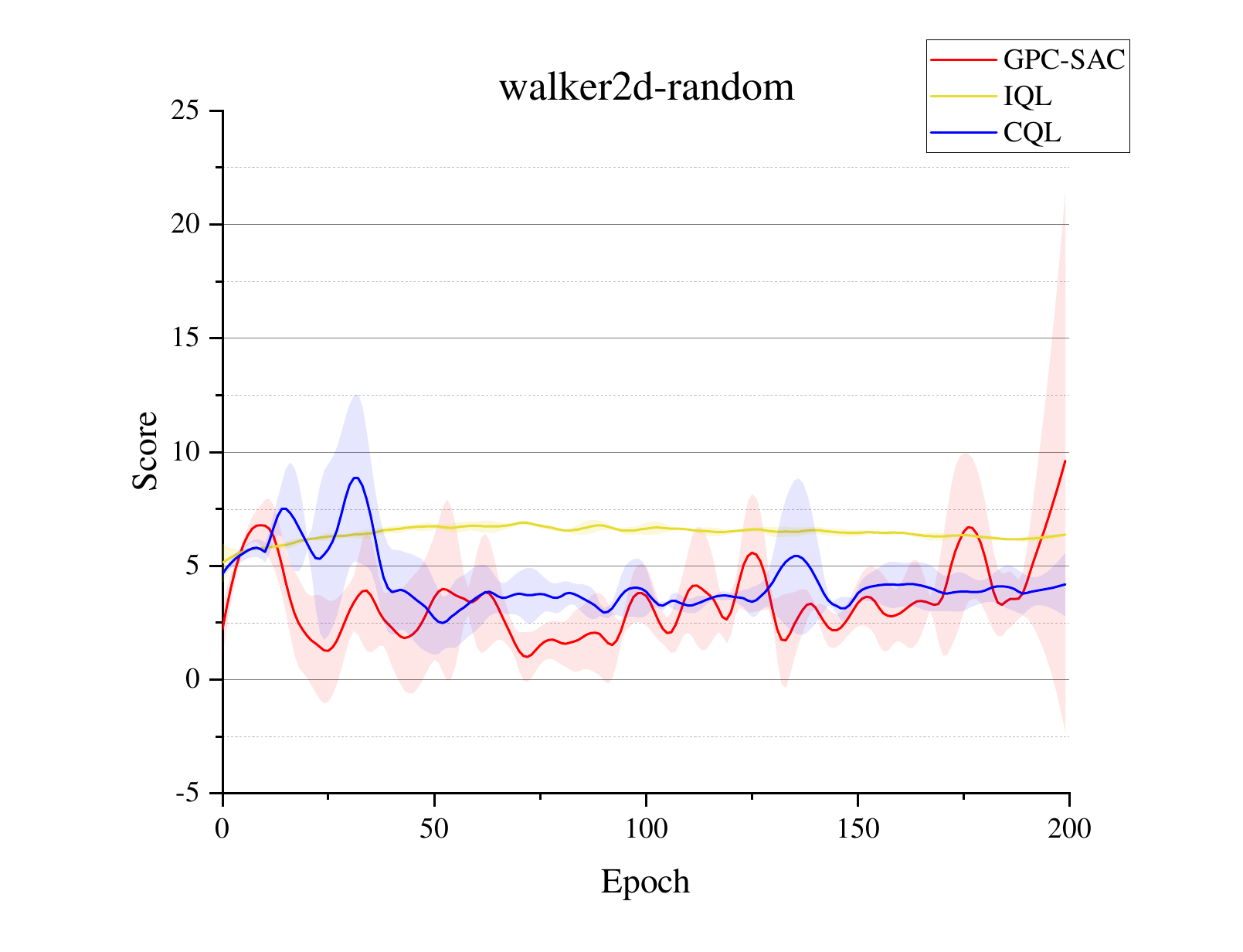}}
	\subfigure[w-m]{
		\includegraphics[width=0.31\linewidth]{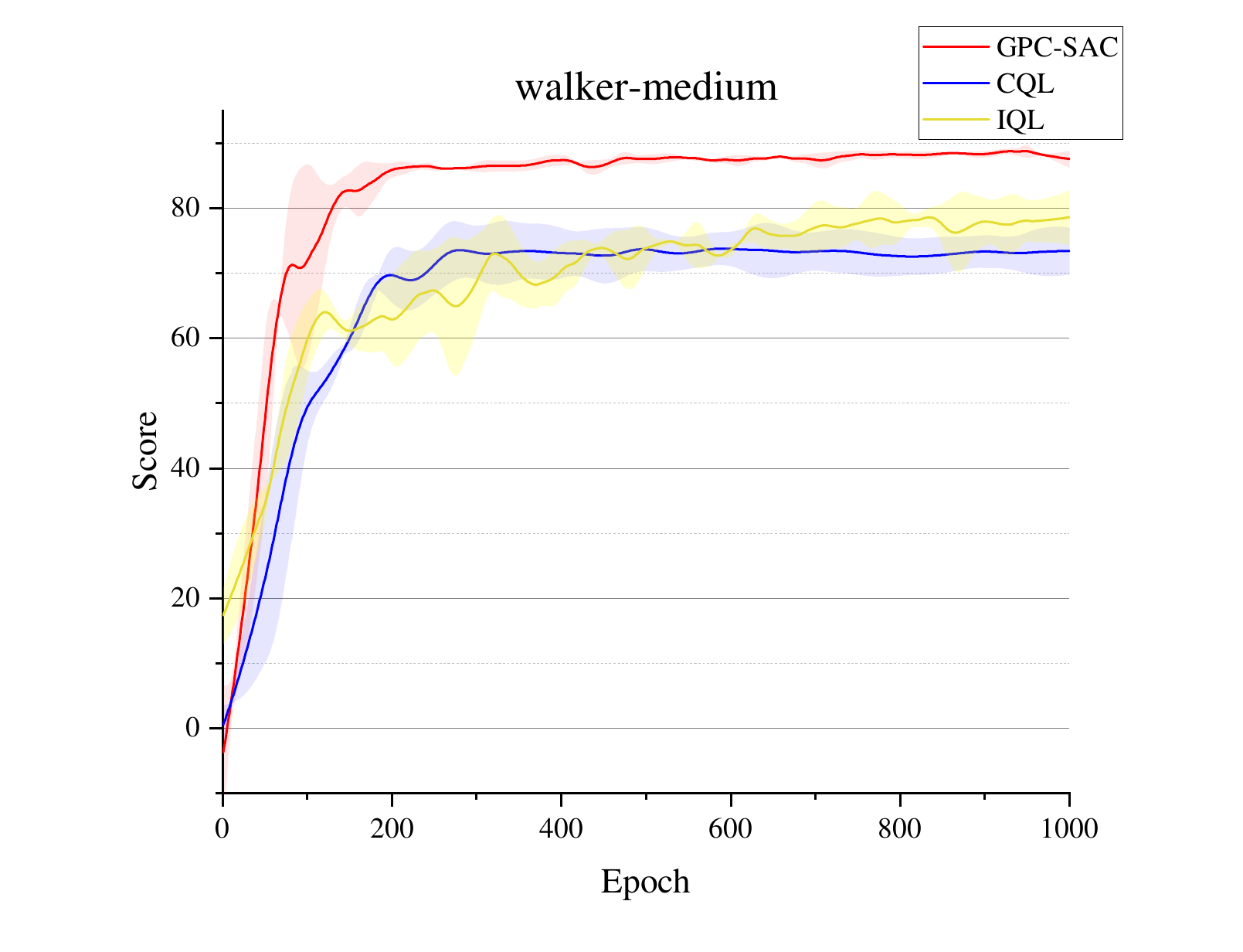}}
	\subfigure[w-m-r]{
		\includegraphics[width=0.31\linewidth]{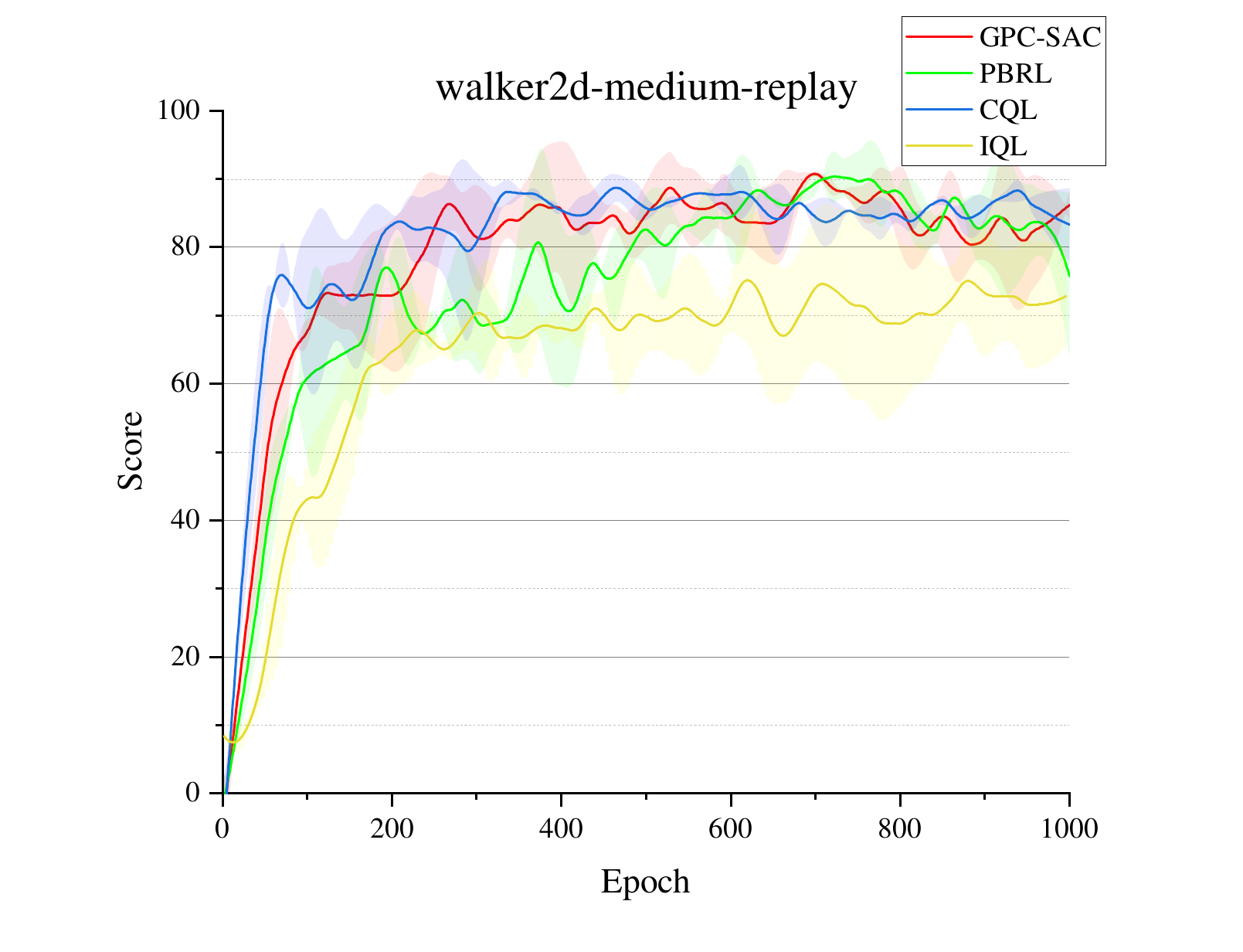}}
    \subfigure[w-m-e]{
		\includegraphics[width=0.31\linewidth]{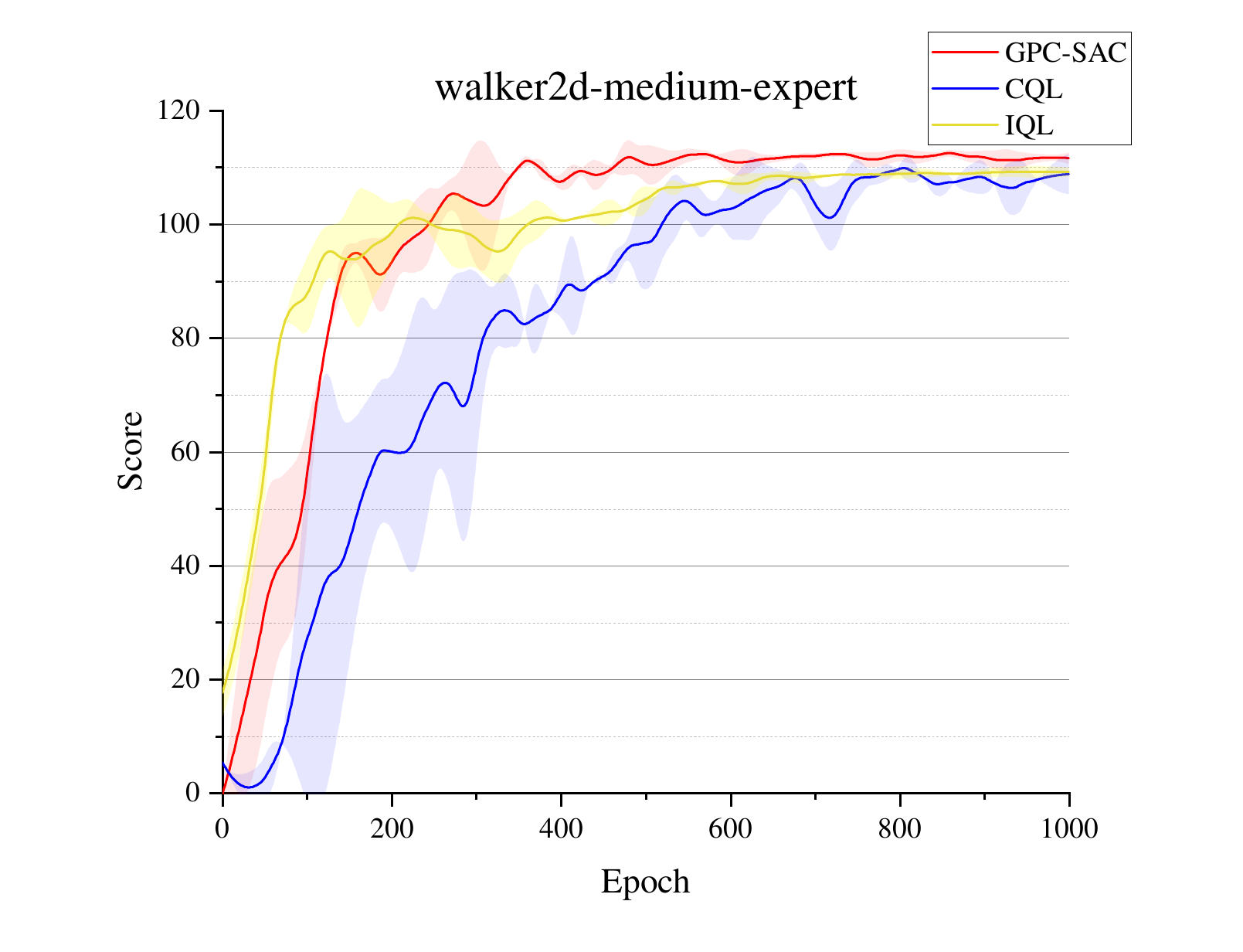}}
	\subfigure[w-e]{
		\includegraphics[width=0.31\linewidth]{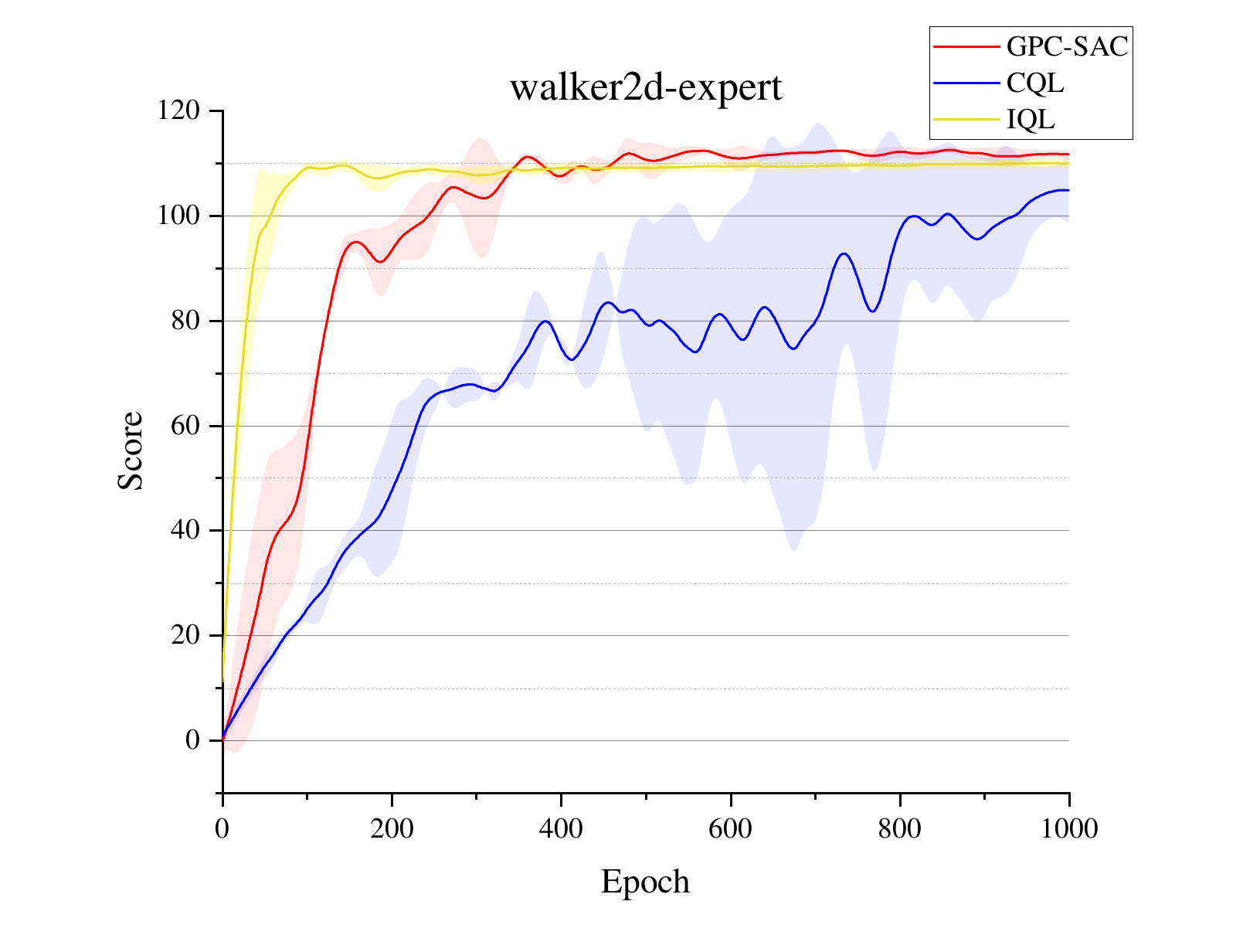}}
    \subfigure[half-r]{
		\includegraphics[width=0.31\linewidth]{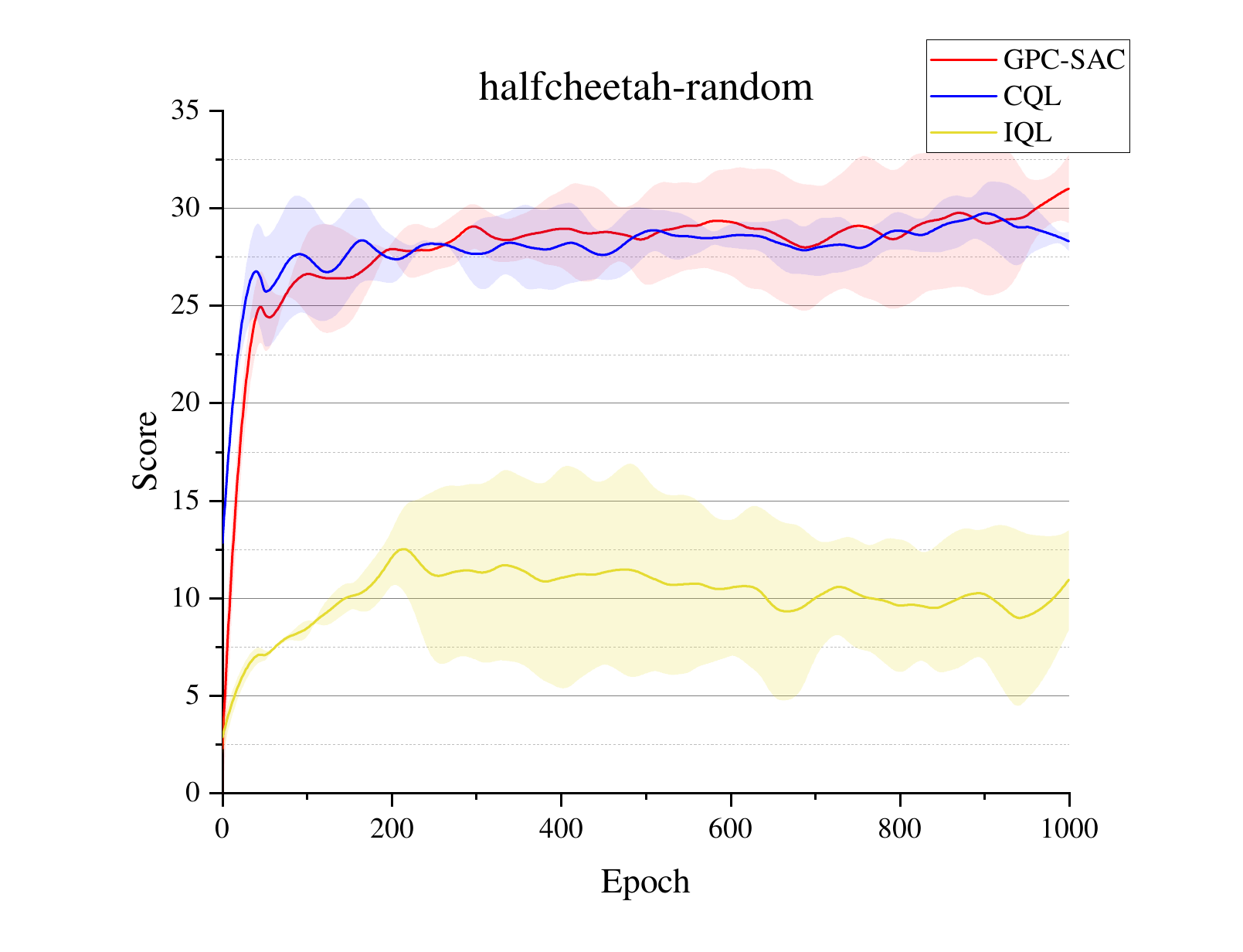}}
    \subfigure[half-m]{
		\includegraphics[width=0.31\linewidth]{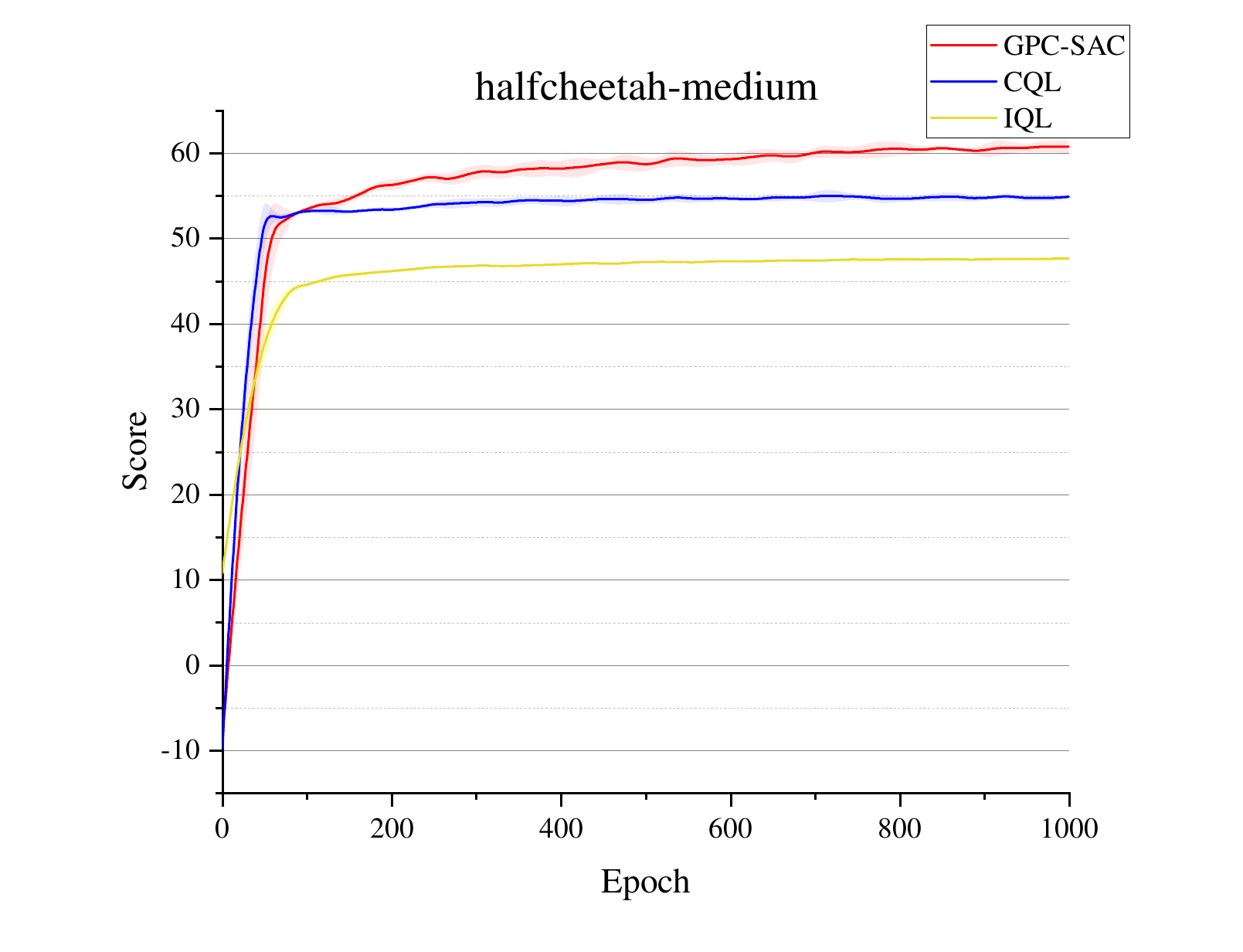}}
	\subfigure[half-m-r]{
		\includegraphics[width=0.31\linewidth]{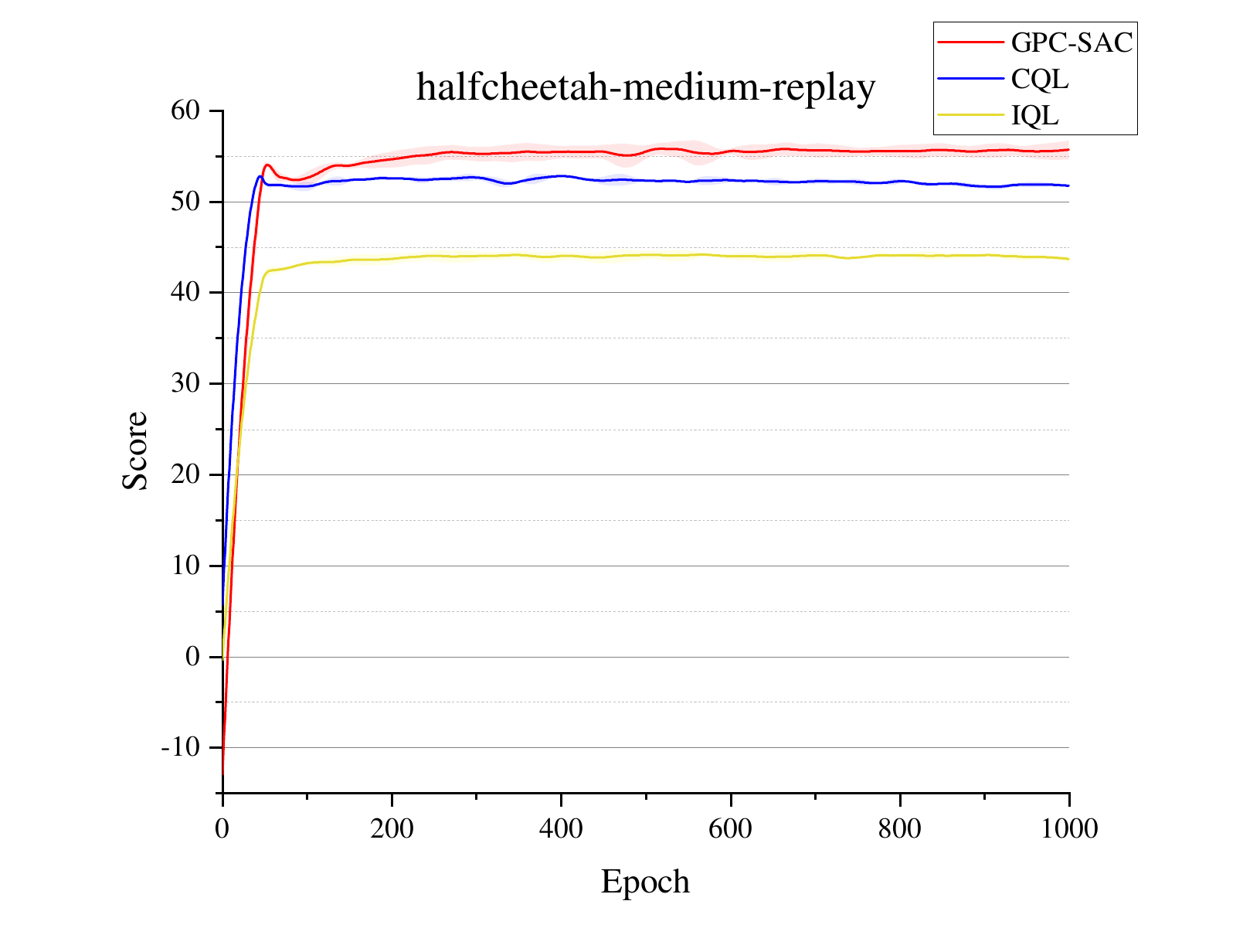}}
    \subfigure[half-m-e]{
		\includegraphics[width=0.31\linewidth]{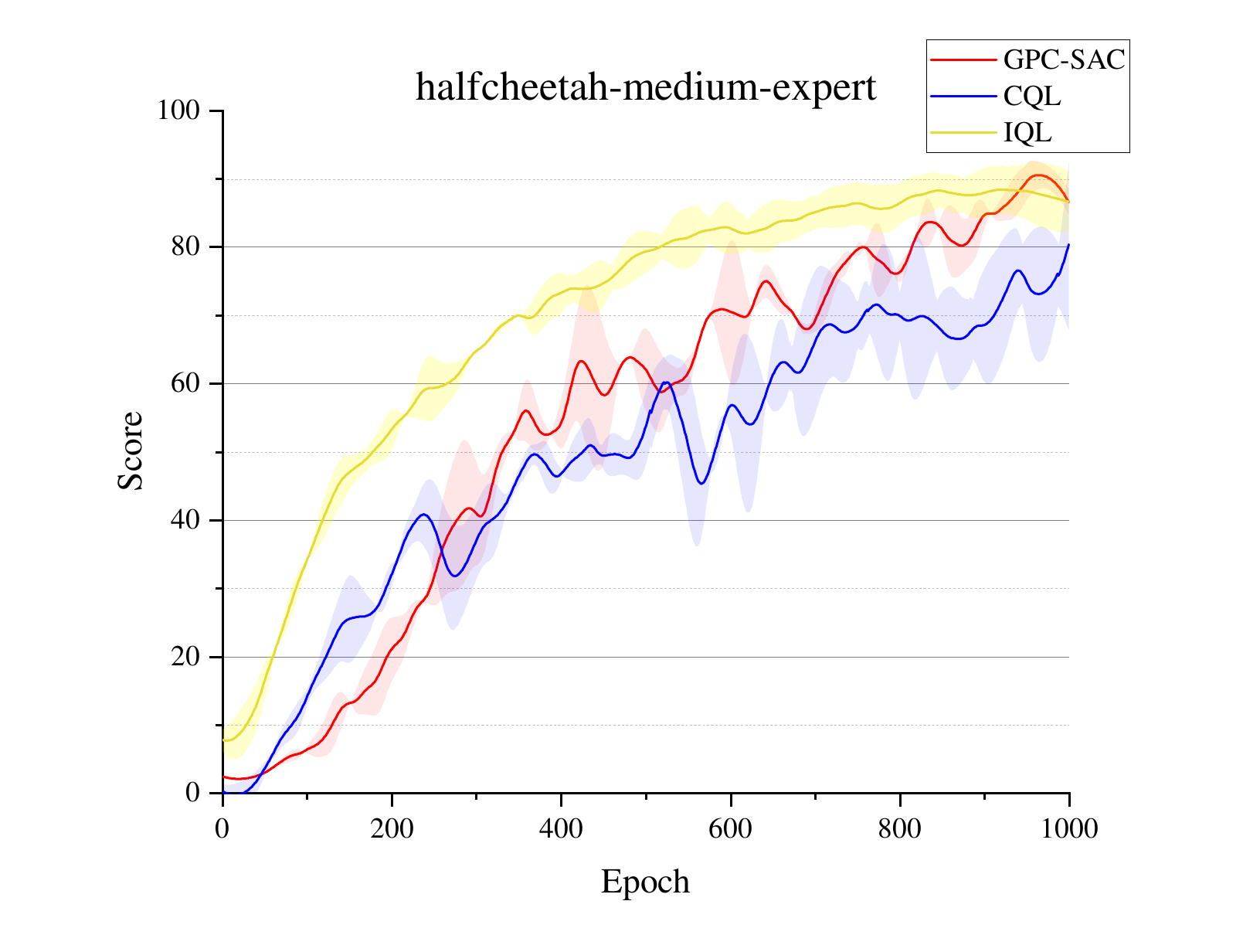}}
	\subfigure[half-e]{
		\includegraphics[width=0.31\linewidth]{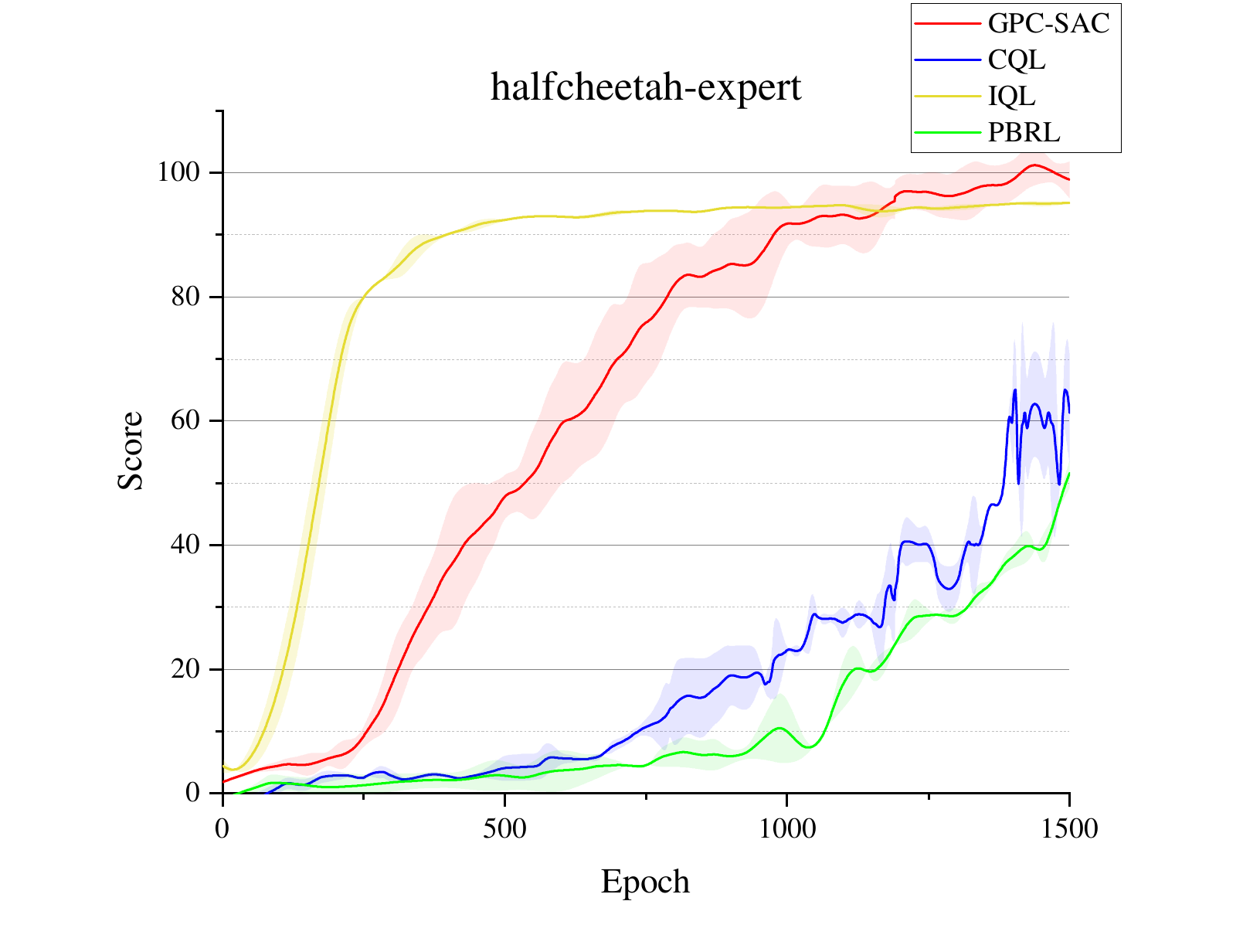}}
    \subfigure[hp-r]{
		\includegraphics[width=0.31\linewidth]{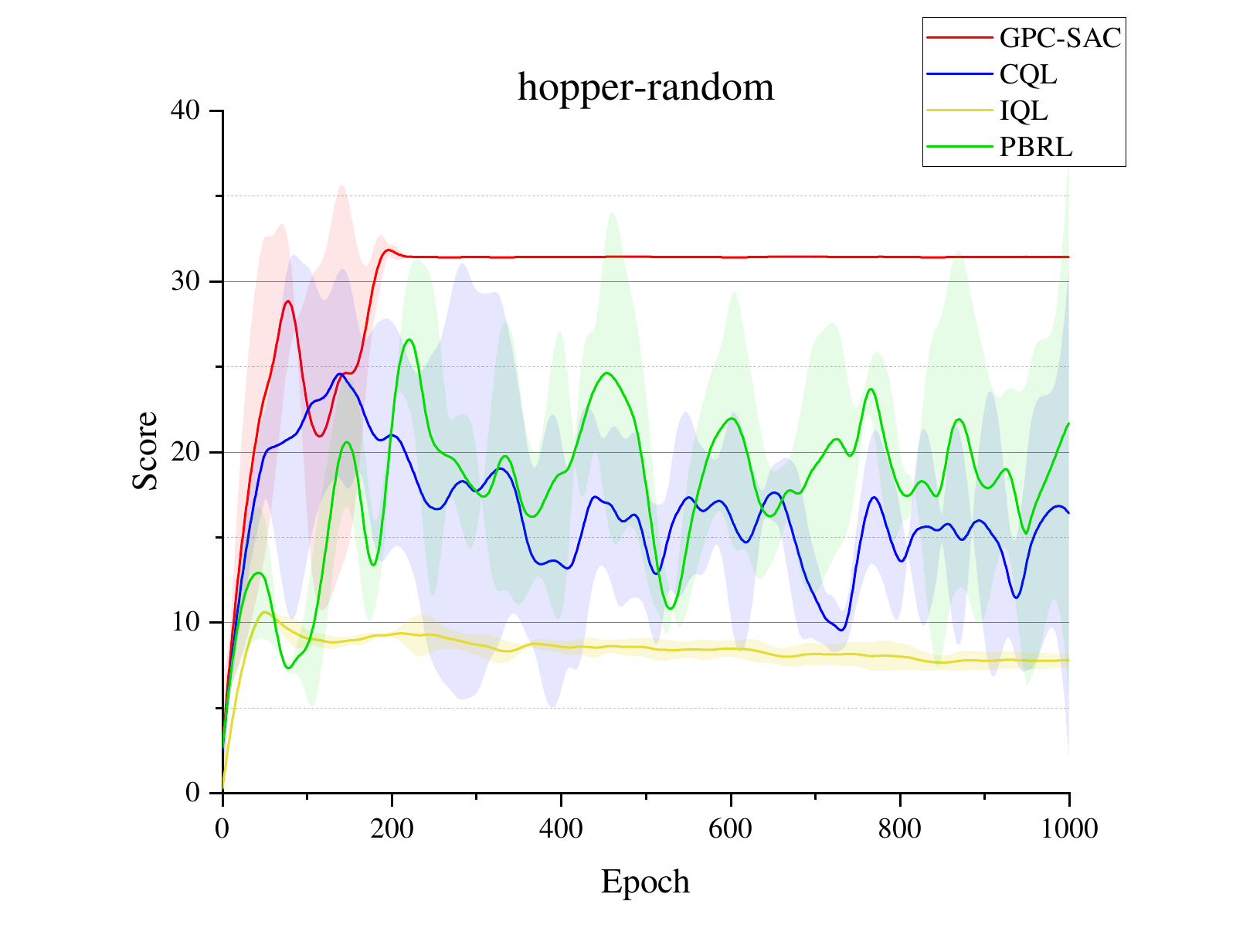}}
    \subfigure[hp-m]{
		\includegraphics[width=0.31\linewidth]{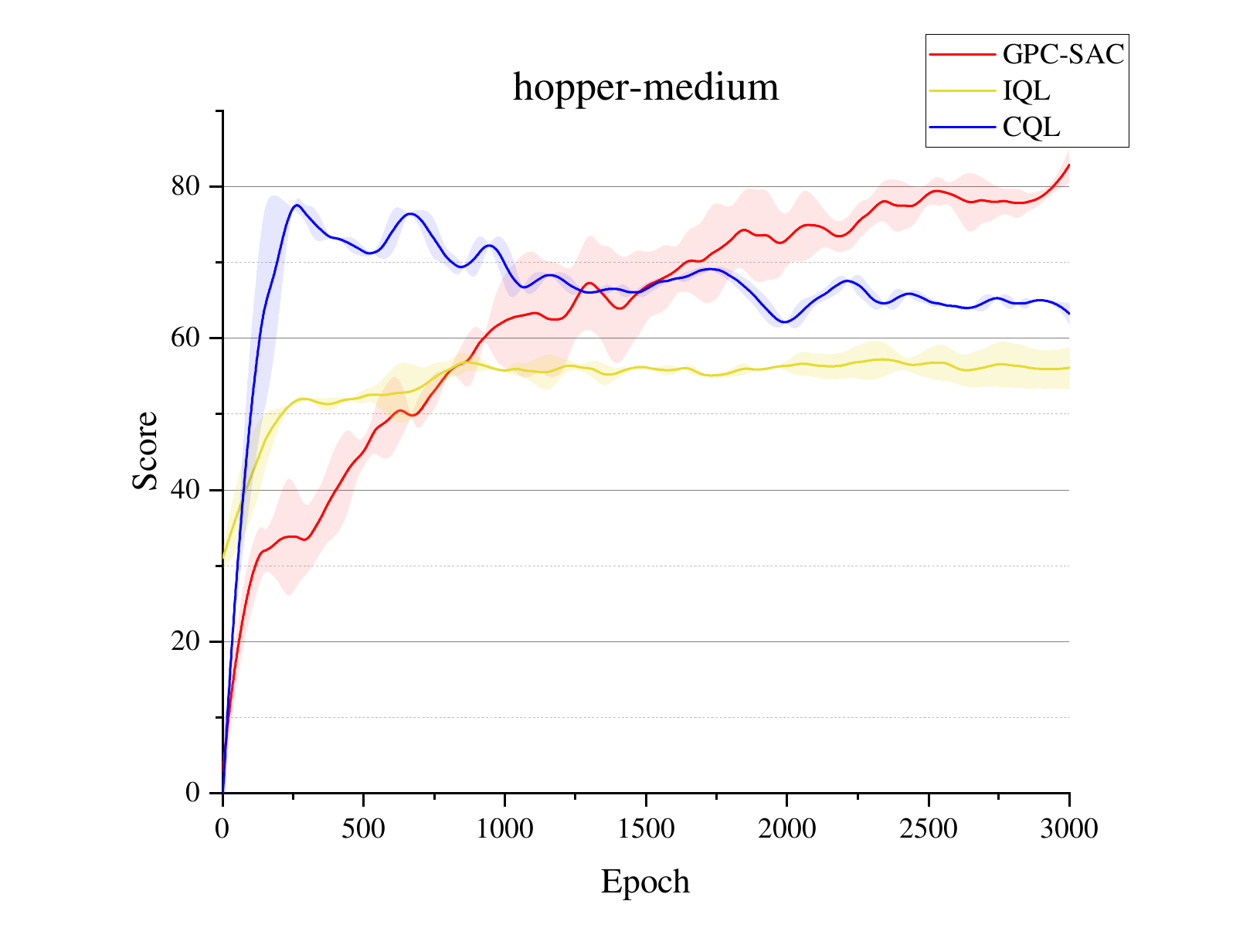}}
    \caption{Gym training curve}
    \end{figure*}
    \begin{figure*}[!htbp]
    \ContinuedFloat
    \centering
	\subfigure[hp-m-r]{
		\includegraphics[width=0.31\linewidth]{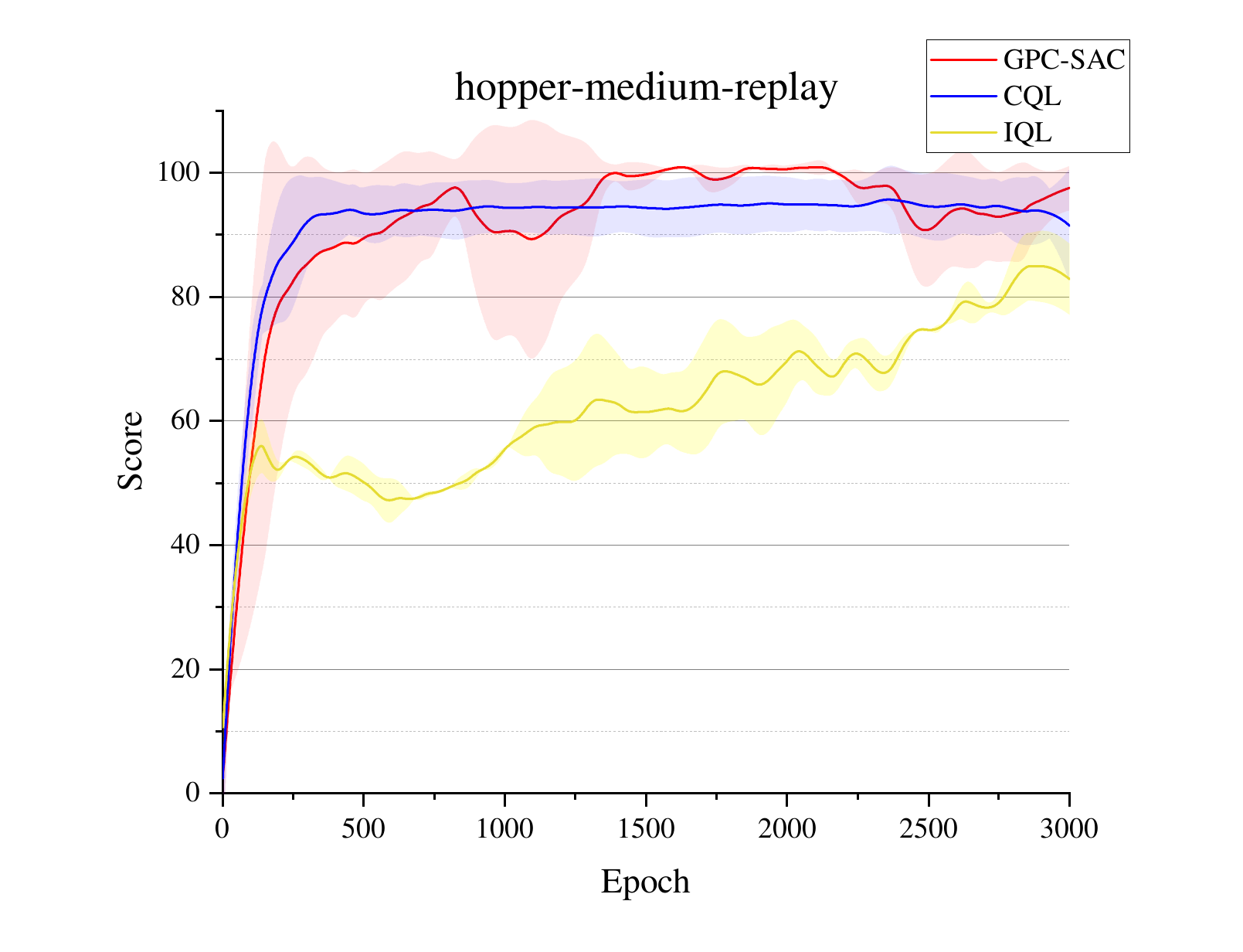}}
    \subfigure[hp-m-e]{
		\includegraphics[width=0.31\linewidth]{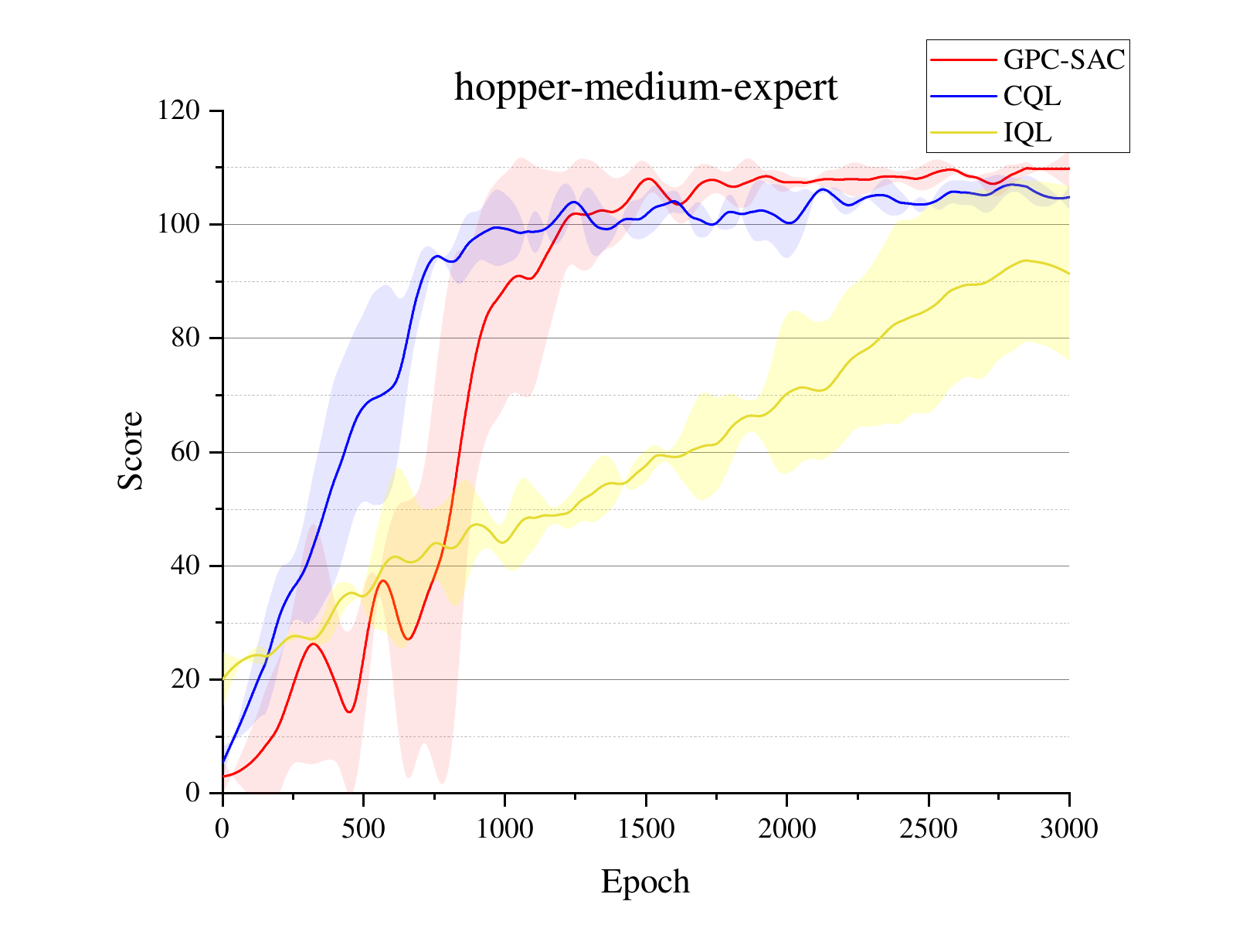}}
	\subfigure[hp-e]{
		\includegraphics[width=0.31\linewidth]{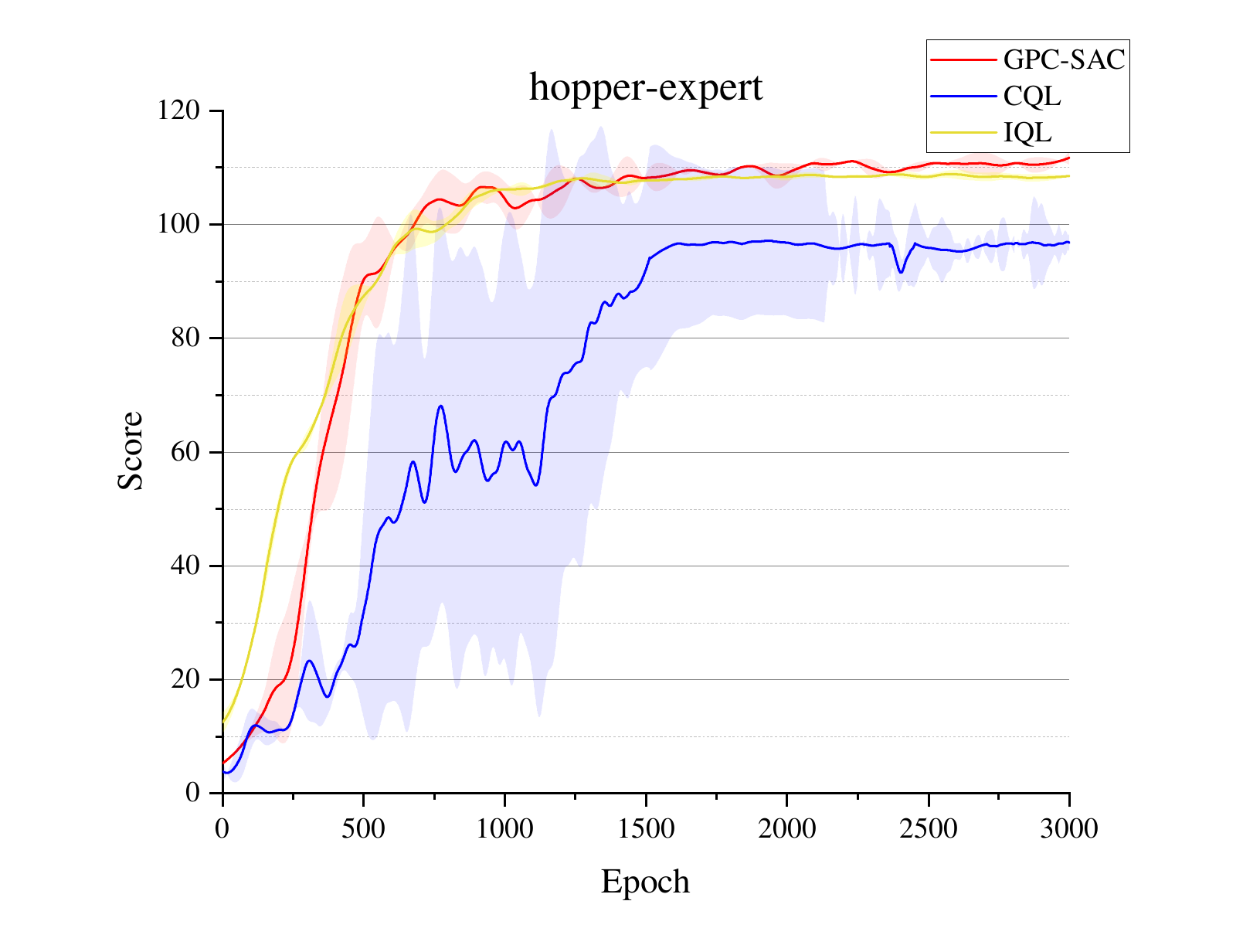}}
	\caption{Gym training curve}
 \label{exp}
\end{figure*}

\subsubsection{Result analysis}
From Table \ref{mainres}, it can be seen that the performance of GPC-SAC is the best in 10 environments, the second best in 3 environments and the third best in 2 environments. In hopper-random, halfcheetah-medium and halfcheetah-expert, the performance of GPC-SAC outperformed the highest-performing algorithm by 20\%. In halcheetah-medium-expert, where GPC-SAC performs relatively poorly, the performance of GPC-SAC is only 5\% lower than that of PBRL, the best-performing algorithm. In general, compared GPC-SAC with classic algorithms and SOTA algorithms, only PBRL has approximate performance. However, the computational cost of PBRL is much larger than GPC-SAC, which can be seen in Table \ref{traintime}. The experiment results show the effectiveness of GPC-SAC, and reasonable uncertainty constraints can be obtained through GPC-SAC with simple calculations. 

\begin{table}[!htbp]
\caption{Experimental results in maze2d and adroit tasks. Here u represents umaze, $m$ represents medium, l represents large, e represents expert.}
    \centering
    \begin{tabular}{ccccc}
        \toprule
        Env   & CQL & IQL & PBRL &\textbf{GPC-SAC}\\
        \midrule
        maze2d-u    & 56.3   & 46.9   &86.7 &\textbf{141.0}\\
        maze2d-m   & 24.8   & 32.0   &71.1  &\textbf{103.7}\\
        maze2d-l    & 15.3   & 64.2   &64.5  &\textbf{134.3}\\
        pen-e      & 107.0  & 117.2  &\textbf{137.7} &118.8 \\
        hammer-e   & 86.7   & 124.1  &\textbf{127.5} &95.6  \\
        door-e     & 101.5  & \textbf{105.2}  &95.7  &101.0\\
        average         &65.3    &81.6     &97.2   &\textbf{115.7}\\
        \bottomrule
    \end{tabular}
    \label{score_ad}
\end{table}

\begin{figure*}[!h]
	\centering
    \subfigure[maze2d-umaze]{
		\includegraphics[width=0.31\linewidth]{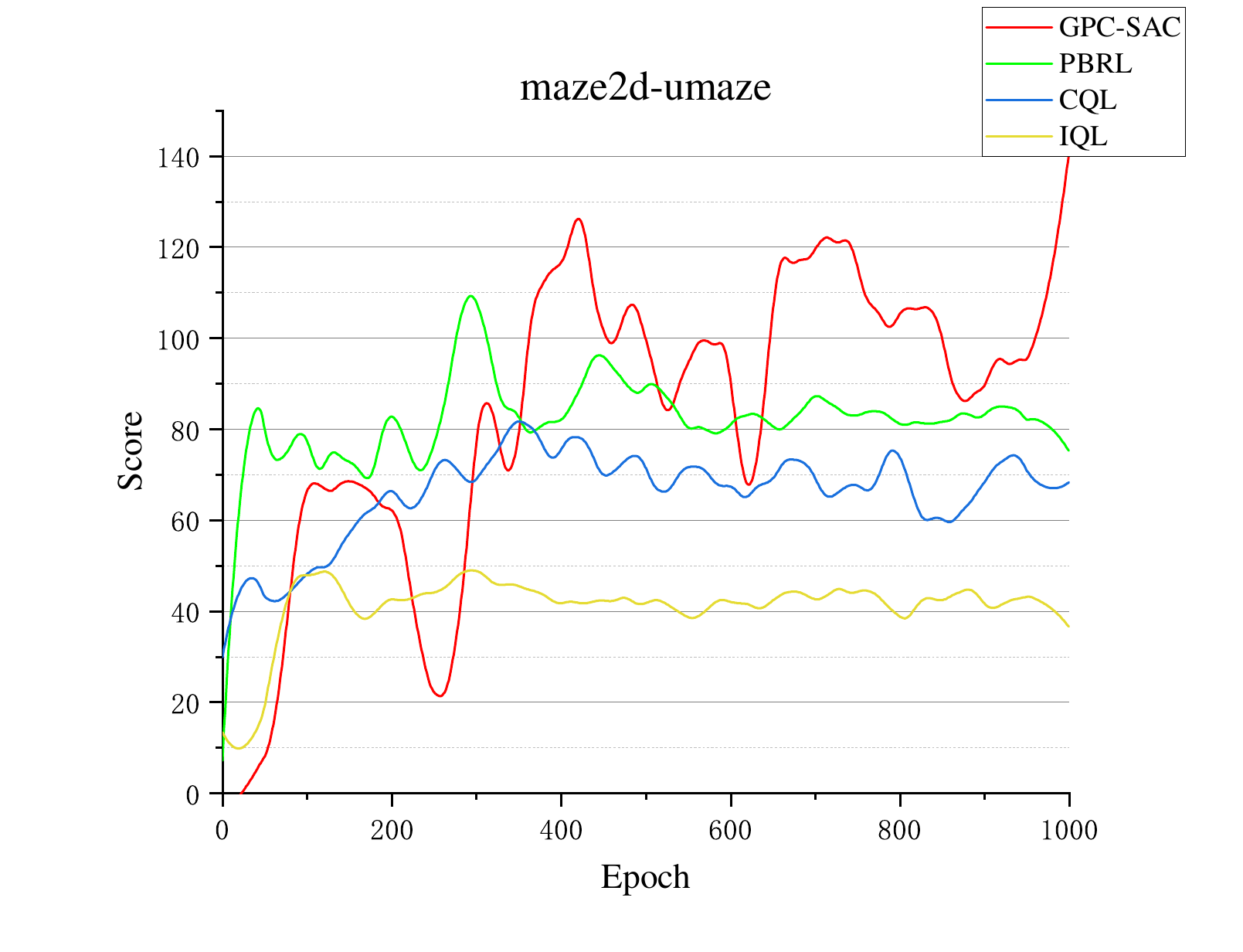}}
	\subfigure[maze2d-medium]{
		\includegraphics[width=0.31\linewidth]{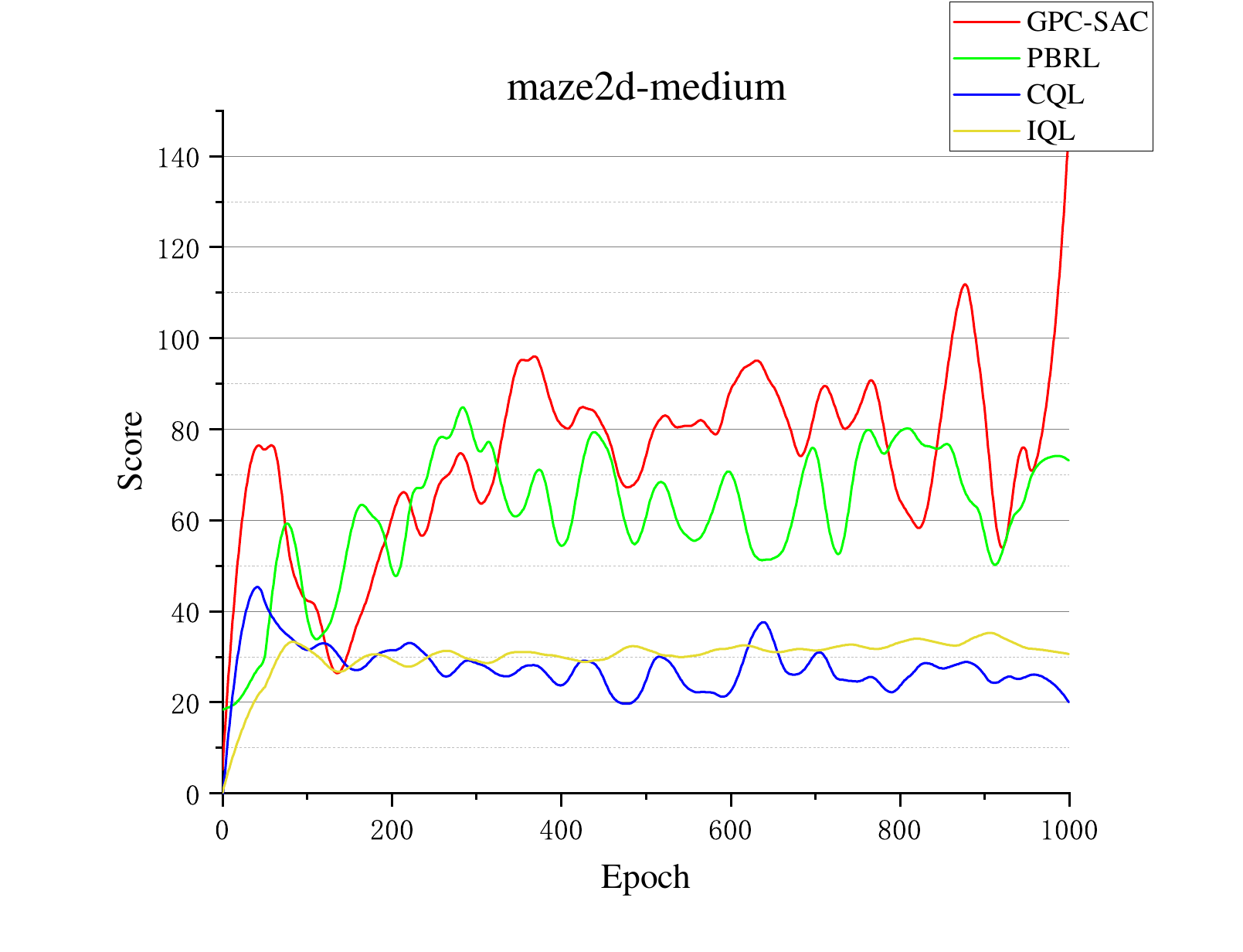}}
	\subfigure[maze2d-large]{
		\includegraphics[width=0.31\linewidth]{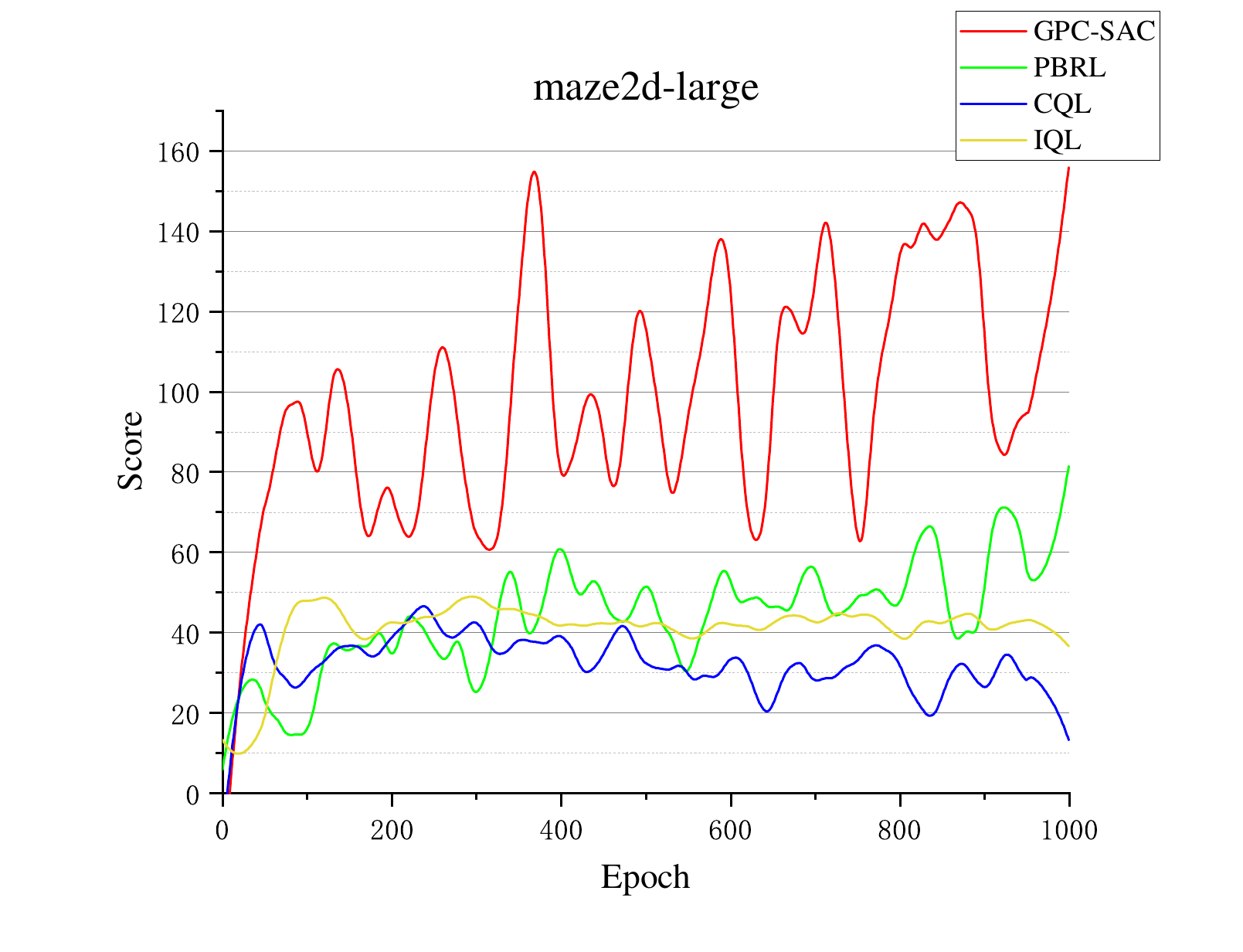}}
  \caption{maze2d training curve}\
\end{figure*}

\subsubsection{Experiment in Adroit}
\label{adroit}

\begin{figure*}[!h]
	\centering
    \subfigure[pen-expert]{
		\includegraphics[width=0.31\linewidth]{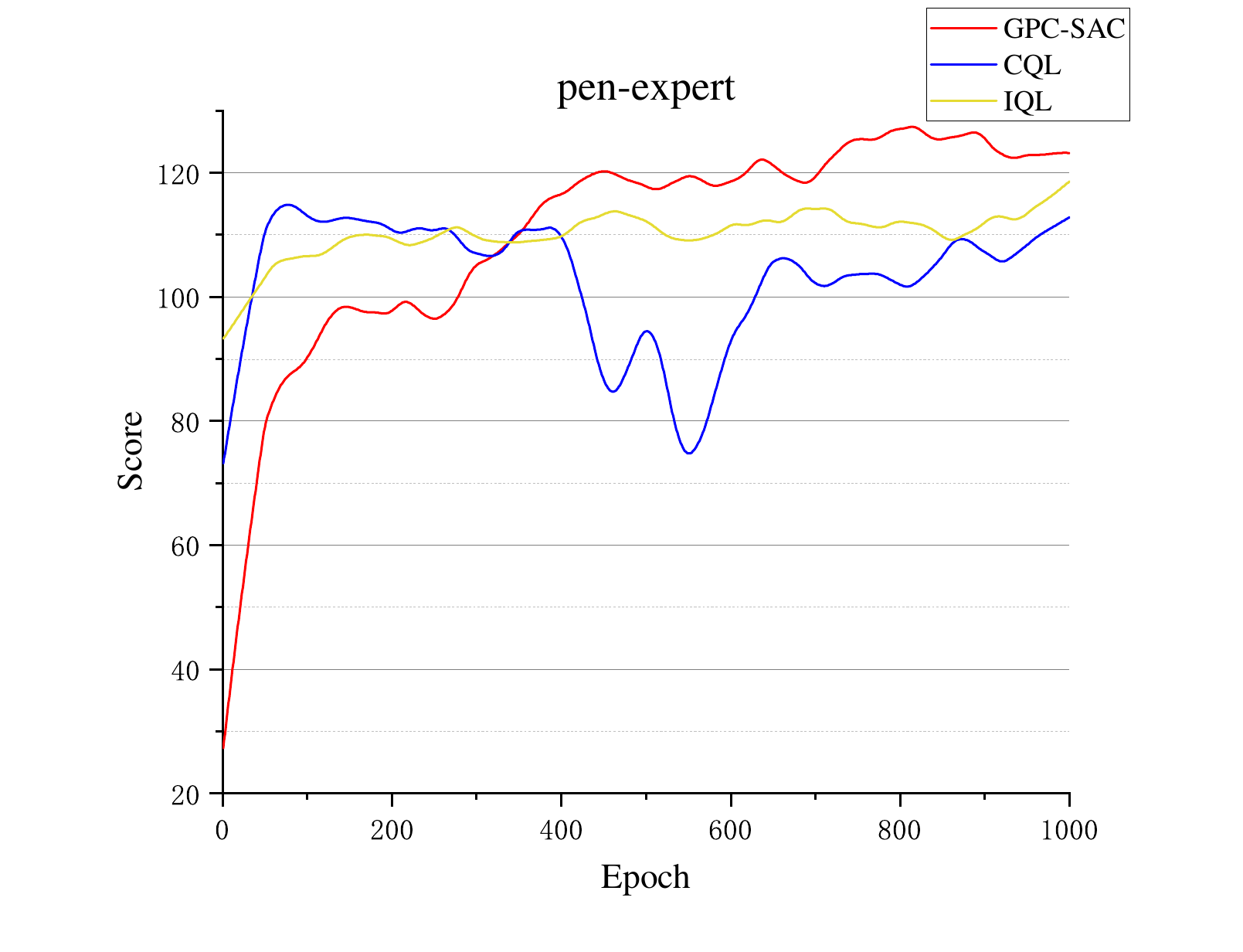}}
	\subfigure[hammer-expert]{
		\includegraphics[width=0.31\linewidth]{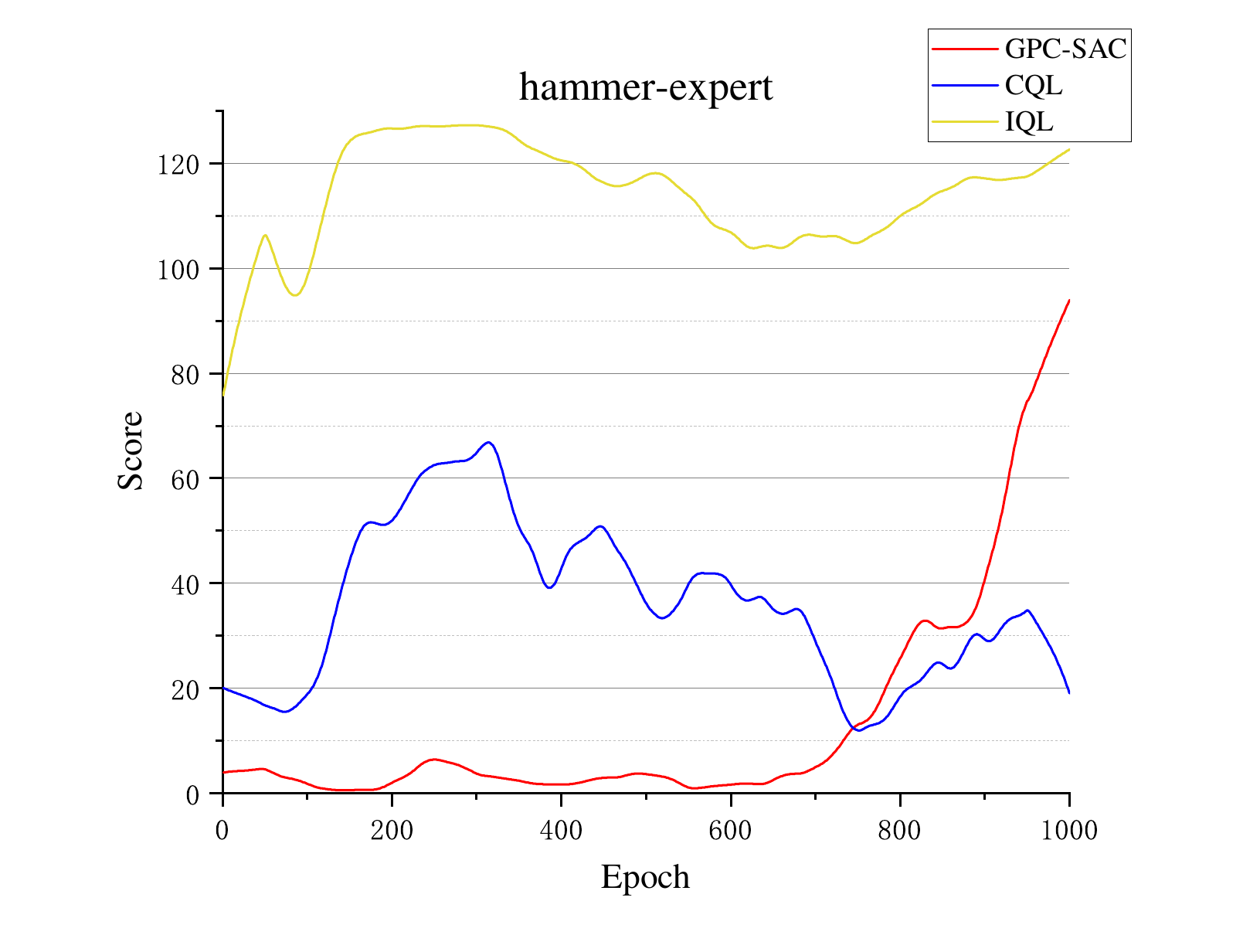}}
	\subfigure[door-expert]{
		\includegraphics[width=0.31\linewidth]{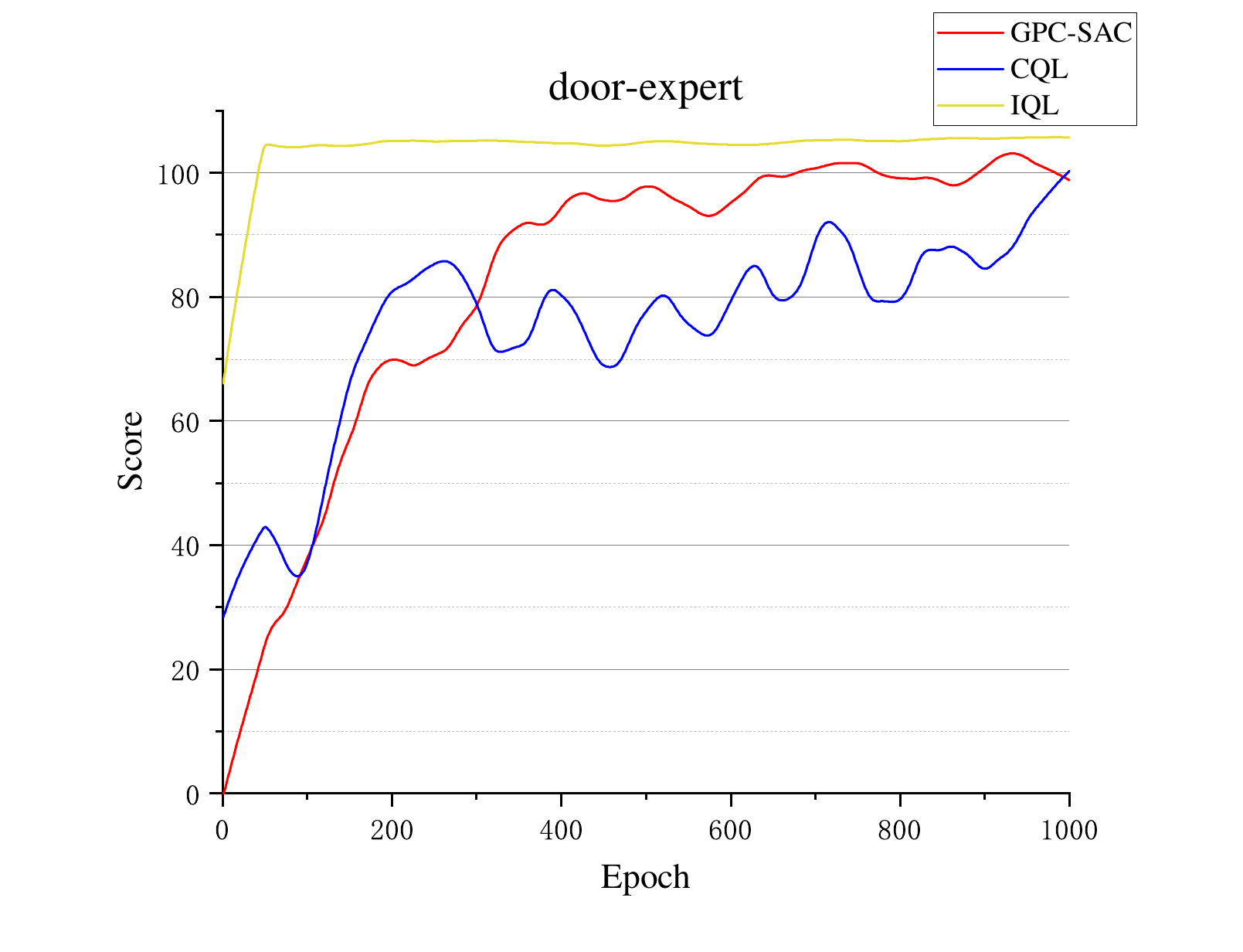}}
  \caption{adroit training curve}\
\end{figure*}

In each environment, the training curve of GPC-SAC is compared with IQL and CQL, as shown in Figure \ref{exp}.
In addition to the experimental results in Table \ref{mainres}, the training curves of GPC-SAC and various algorithms are compared. In halfcheetah-expert, both CQL and PBRL, which impose constraints on the Q-value, fail to find a good policy within 1500 epochs, both CQL and PBRL require 3000 epochs of training to find an optimal policy. Conversely, a perform-well policy can be found by GPC-SAC within 1500 epochs and as the training progresses it can be further improved. For the hopper-random environment where most algorithms fail to get good results, GPC-SAC not only demonstrates the best performance but also exhibits the best stability and training speed. Overall GPC-SAC has good stability in d4rl environments and is able to get the best scores within a short epoch, it can be said that GPC-SAC reaches the SOTA in d4rl environments.

\subsection{Experiment in other environment}
\subsubsection{Experiment in Maze2d}
\label{maze}
One advantage of GPC-SAC is its capacity to address complex problems in low-dimensional situations. To verify this advantage, experiments are conducted on Maze2D environments. Maze2D represents a maze navigation task designed to evaluate the efficacy of offline RL algorithms in connecting suboptimal trajectories to identify the shortest path to a target point. Maze2D comprises three tiers of mazes, from easy to challenging: umaze, medium, and large. Table \ref{score_ad} are the training results of GPC-SAC and some SOTA algorithms in Maze2D.
It is evident that GPC-SAC delivers remarkable performance across all Maze2D tasks, especially in maze2d-large. All other algorithms we know can not reach 65 scores. GPC-SAC achieves a score of 134.3 points, significantly surpassing current SOTA algorithms.

The shortcoming of count-based methods is that as the state-action space increases, the possible state-actions exponentially increase. This is a problem currently plaguing count-based class methods. One advantage of GPC-SAC over other count-based methods is its capability to treat state space and action space independently. Specifically, in scenarios with extra large action dimensions, GPC-SAC is not affected by it as GPC-SAC only requires states within the dataset. For the problem of large state dimension, as the amount of data in static datasets is limited, it will not affect GPC-SAC. For the problem of large action dimension, This problem is currently solved by reducing the number of action space partitions $k_{1}$. 
Adroit is a series of more complex environments compared to Gym, and the action dimensions of Adroit are all surpassing 20 dimensions. Therefore, there is a huge challenge for algorithms that use count-based methods.
To demonstrate the effectiveness of GPC-SAC in addressing high-dimensional problems, experiments are conducted in the Adroit environments. The experimental results are presented in Table \ref{score_ad}.

Even though count-based methods encounter challenges in solving high-dimensional problems, GPC-SAC also exhibits commendable performance in Adroit environments, especially in pen-expert and door-expert, GPC-SAC can perform well. 

\subsection{Hyperparameter}
\label{implement detail}
\begin{table}
\vspace{-2.0em}
\caption{Partial hyperparameters in GPC-SAC}
    \centering
    \begin{tabular}{lr}
        \toprule
        Hyperparameter   & value  \\
        \midrule
        discount         & 0.99     \\
        policy\_rl        & 3e-4   \\
        qf\_rl            & 3e-4     \\
        soft\_target\_tau  &  5e-3  \\
        train\_pre\_loop       & 1000    \\
        Q-network     & FC(256,256,256)    \\
        Optimizer     & Adam    \\
        activation\_function       & relu     \\
        $\beta$         &  1        \\
        $\beta_{next}$  & 0.1       \\
        \bottomrule
    \end{tabular}
    \label{parameter1}
\end{table}
\begin{table*}[!htp]
\vspace{-2.0em}
\caption{The number of the state space and action space has been partitioned in different D4RL environments. And the uncertainty constraint parameter $\alpha$. Here, maze-d is maze2d-diverse, maze-m is maze2d-medium and maze-l is maze2d-large.}
    \centering
    \begin{tabular}{ccccccc}
        \toprule
        Environment  & action\_dim & \textbf{$m$} & mean& (max+min)/2  & \textbf{$\alpha$}   &epoch\\
        \midrule
        hp-r           &3 & 15 &0.83 &1.35     & 1  &1000 \\
        hp-m-r     &3 & 15 &2.37 &2.57      & 3  &3000\\
        hp-m           &3 & 15 &3.11 &3.25       & 4  &3000  \\
        hp-m-e     &3 & 25 &3.36 &3.56      & 5  &3000\\
        hp-e           &3 & 25 &3.61 &3.76       & 5  &3000\\
        half-r     &6  & 5 &-0.29 &-0.41        & 0  &1000 \\
        half-m-r &6 & 5 &3.09 &2.16      & 2  &1000\\
        half-m      &6 & 5 &4.77 &2.75        & 3  &1000 \\
        half-m-e &6 & 5 &7.71 &5.42      & 6  &1000\\
        half-e       &6 & 7 &10.66 &5.42       & 6  &3000\\
        w-r         &6 & 6 &0.10 &0.20        & 2  &200 \\
        w-m-r    &6 & 6 &2.47 &1.92      & 3  &1000\\
        w-m          &6 & 6 &3.39 &2.96       & 4  &1000\\
        w-m-e  &6 & 6 &4.16 &2.96        & 5  &1000\\
        w-e         &6 & 6 &4.92 &3.27        & 5  &1000\\
        maze-d     & 2   & 30  & 0.08  &  0.50   & 0.2  &1000\\
        maze-m     & 2   &  30     & 0.02    & 0.50    &  0.2  &1000 \\
        maze-l     &2     & 30      &0.01     &   0.50    &0.2    & 1000 \\
        pen-e      &24     &1       &29.25    &27.38    &30   &1000\\
        hammer-e   &26   &1       &61.88   &55.7     &70    &1000\\
        door-e     &28    &1   &14.55  &9.75  &15   &1000\\
        \bottomrule
    \end{tabular}
    \label{parameter2}
\end{table*}

\begin{figure*}[!htp]
	\centering
 \vspace{-0.8em}   
\setlength{\abovecaptionskip}{0.cm}  
\setlength{\abovecaptionskip}{0.cm}  
	\label{beta}
        \subfigure{\includegraphics[width=0.48\linewidth]{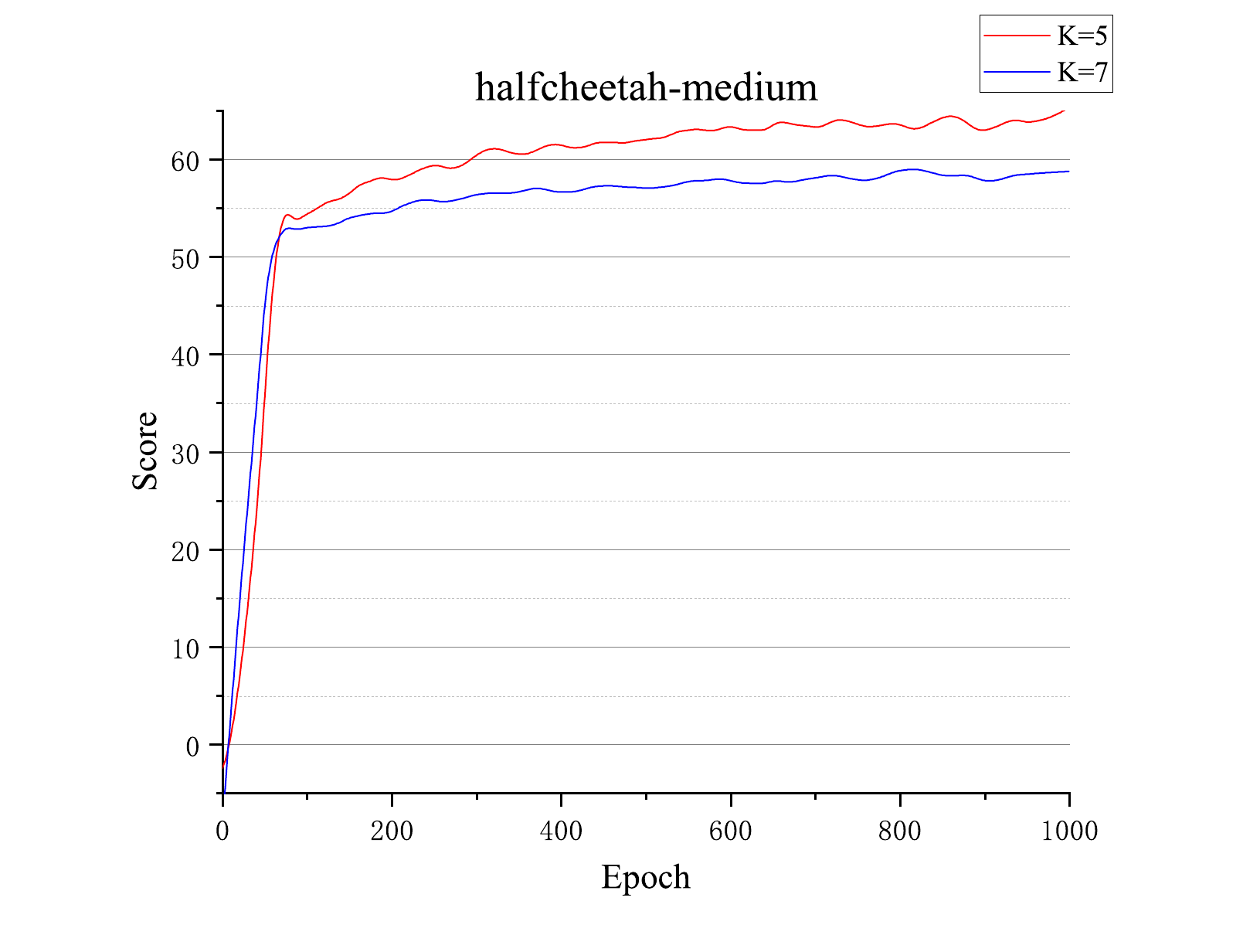}}
	\subfigure{\includegraphics[width=0.48\linewidth]{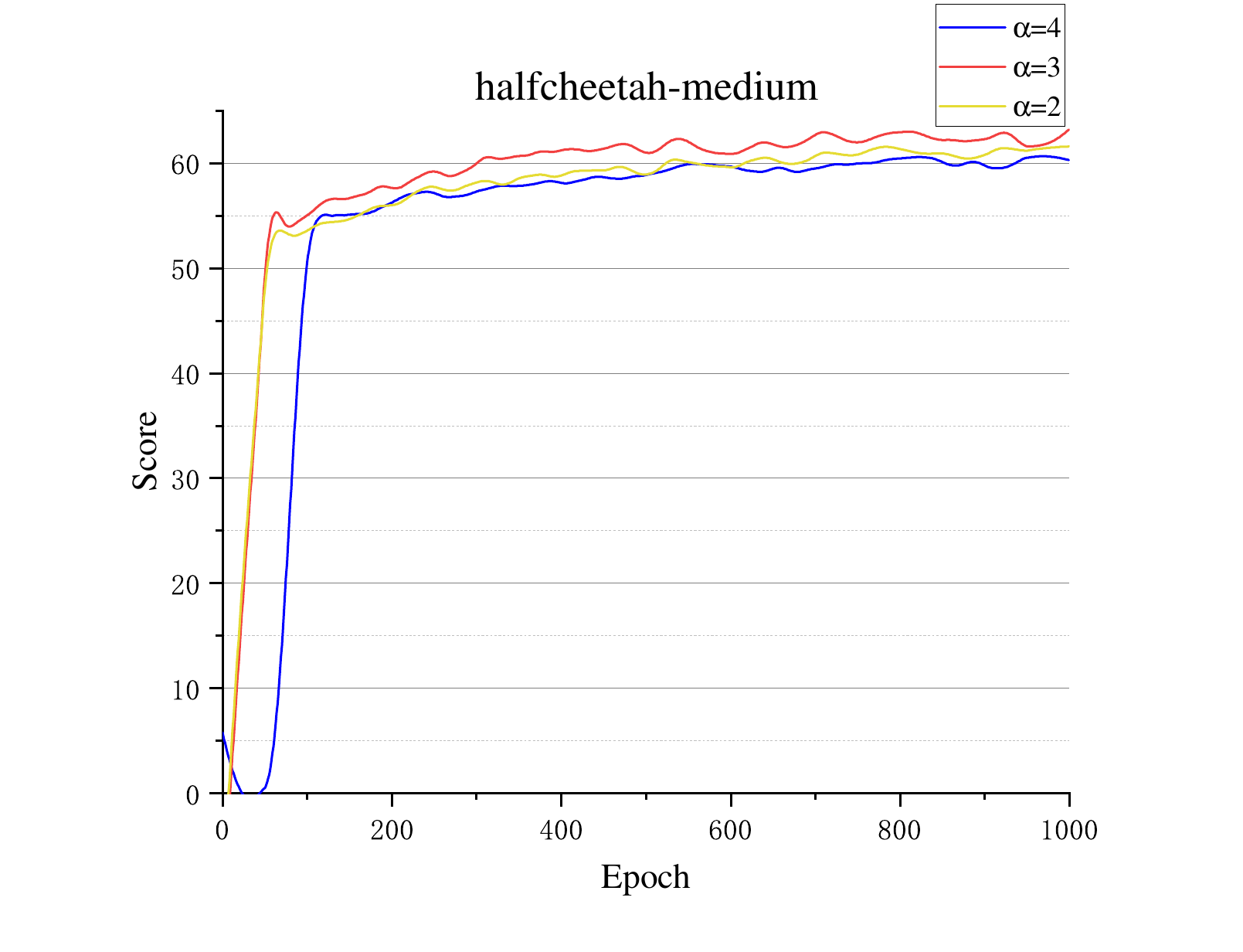}}
	\caption{The ablation on $m$ and $\alpha$}
	\label{beta and k}
\end{figure*}

\subsection{Hyperparameter setting}
For the hyperparameters, GPC-SAC is modified from SAC, and the selection of most hyperparameters in GPC-SAC follows that of SAC.
 In addition to the hyperparameters used in SAC, some additional hyperparameters are introduced in GPC-SAC: the number of partitions $\alpha$ for each dimension of state-action space, uncertainty constraint parameter $\alpha$ and uncertainty coefficient $\beta$ for $Q_{next}$. Table \ref{parameter1} shows the hyperparameters remain consistent across all environments.

$m$ and $\alpha$ are the key hyperparameters for GPC-SAC, Table \ref{parameter2} shows the hyperparameter combinations used in D4RL experiments. The values of $m$ and $\alpha$ are chosen based on the dimension of the state-action space. The selection of $m$ is crucial. When the selected $m$ is too large, it will slow down the training speed. While the $m$ is too small, it may affect the training results. The value of $\alpha$ is crucial too as it determines whether appropriate uncertainty constraints can be obtained. When selecting appropriate $\alpha$, the count-based constraint method can approximate epistemic uncertainty. 

For parameters $m$ and $\alpha$, in addition to the above analysis, reference formulas are also provided. Although using specific parameters in some environments may lead to better results, using the parameters provided can also achieve good results. For parameter $n$, selecting values within the range $12 \le \sqrt m {d_{action}} \le 16$ are recommended, where $d_{action}$ expressed the dimension of action space. 
For parameter $\alpha$, selecting values close to ${(min  + max)/2 }$ and the mean values of the dataset are recommended. 

The sensitivity of hyperparameter selection within a certain range is relatively low. In the experiment, when the value of $\alpha$ is close to the mean and ${(min  + max)/2 }$, good results can be obtained. Similarly, for $n$, when $n$ is selected that falls within the range provided, good results can also be obtained, which can be seen from Figure\ref{beta and k}.
In addition to hyperparameters $m$ and $\alpha$, we also conducted ablation experiments on other hyperparameters, and the specific experimental results can be seen in Appendices.

\subsection{Computational cost}
\begin{table}[h!]
\caption{The computational cost of some algorithms on the D4RL dataset. Runtime is the average of one epoch run in each environment. GPU Memory is the average of GPU Memory used in each environment.}
    \centering
    \begin{tabular}{ccc}
        \toprule
        Algorithm   & Runtime(s) & GPU Memory(GB) \\
        \midrule
        IQL       &13.6         & 2.0       \\
        SAC       & 42.1        & 1.3        \\
        CQL       & 58.8        & 1.4        \\
        PBRL      & 178.9         & 1.8        \\
        \textbf{GPC-SAC}   & 47.7      & 1.3        \\
        \bottomrule
    \end{tabular}
    \label{traintime}
\end{table}

Besides comparing performance, the computational time and computational space of GPC-SAC are compared with PBRL, SAC and CQL which constrain Q-values and IQL, another type of algorithms that learn policy through behavioral cloning. In the experiment, the number of ensembles used in PBRL is 10, which is consistent with the usage in the original paper.
The comparative experiments are run on a single NVIDIA GeForce RTX 3090, we compare the time and space of training for one epoch. 

From Table \ref{traintime}, it can be seen that IQL has the shortest training time because IQL updates policies based on behavior cloning, which reduces the number of operations compared to other algorithms based on SAC. However, due to the additional introduction of the V-value network, IQL requires a significant amount of memory usage. Comparing GPC-SAC with SAC, only a small amount of training time and operation space increased. Compared with PBRL using ensemble method quantization uncertainty and CQL using regularization constraints, GPC-SAC using GPC has shorter training time and lower computational space. This is because PBRL optimizes the network sequentially to obtain the best results; CQL requires multiple gradient computations brought by additional regularization. However, only a simple parallel operation of discretizing action space is needed by GPC-SAC, so it has less computational cost.

\section{Conclusions and Limitations}
In this paper, we proposed GPC, a novel method that utilizes pseudo-counting to quantify uncertainty in offline RL and use it to constrain the Q-value of OOD state-actions, resulting in an algorithm GPC-SAC. In theory, we proved that accurate uncertainty constraints can obtained by GPC under fewer conditions than other pseudo-count methods. The experimental results of GPC-SAC show that the performance of GPC-SAC is the best in most environments, especially in low-dimensional complex environments. On the other hand, the computational cost of GPC-SAC is lower than all other algorithms that constrain the Q-values. 

Like other count-based algorithms, the limitation of GPC-SAC is the possible state-actions increasing exponentially with dimensionality. For future work, we hope to study and completely solve this problem. 

\section{ Acknowledgements}
This work was supported by the National Natural Science Foundation of China [grant numbers: 12201656], Science and Technology Projects in Guangzhou [grant numbers: SL2024A04J01579] and Key Laboratory of Information Systems Engineering (CN).

\begin{appendix}
\setcounter{table}{0}   
\setcounter{figure}{0}
\setcounter{section}{0}
\setcounter{equation}{0}
\setcounter{lemma}{0}
\setcounter{theorem}{0}
\section{Algorithm Implementation}
\subsection{Implementation Details}
\label{Implementation Details}
In experiments, the handling of state space and action space is different. Since the selection of states is constrained to the existing states in the static dataset, the selection of states remains independent of the entire training process, so it does not need to gridding states in each training but directly store the grid-mapped values corresponding to each state. Therefore, this article only focuses on dealing with the actions used during each training session. 

Each dimension of the state space and action space are mapped separately by GPC. To prevent algorithm failure caused by the non-existent of Eq\eqref{gridOpertor} caused by the static dataset having only one possible value in a certain dimension, the following strategies are adopted: For state, whether to map that dimension before training is computed. This processing can also reduce the computational space used. For action, $10^{-6}+a_{max}-a_{min}$ is used to prevent algorithm failure. 

\subsection{Grid-Mapping Details}
\label{grid-map detal}
For the implementation of grid-mapping, 
Eq\eqref{gridOpertor} is proposed and $\alpha_{1},\alpha_{2} \ge1$ are choosen. Because during training, the integer corresponding to one state-action may become equivalent to the integer corresponding to another state-action.
For example, for a two-dimensional state-action space, using $m=2$, it can be seen from Eq\eqref{3} that 
\begin{equation}
    v (0,1) = v (1, - 1)=1
\end{equation}
This scenario means an OOD state-action corresponding to the vector (1,-1) is equivalent to another unrelated state-action in the dataset corresponding to vector (0,1). This relationship may lead to an overestimation of the Q-value of some OOD state-actions during training. Specifically, due to  (0,1) having been visited enough times, it results in smaller uncertainty constraints for the state-action (1,-1).  Consequently, when the state-action corresponding to (1,-1) is selected during training, which may cause an overestimation of its Q-value, potentially have a negative impact on the training outcomes.

Eq\eqref{gridOpertor} can consider OOD state-actions that are closer to those state-actions in the dataset to mitigate the impact of such errors. Using Eq\eqref{gridOpertor} can ensure the probability of the policy selecting an overestimated state-action is as small as possible. The detailed proof can be seen in \ref{gridproof}, and the following is an example to illustrate this method.

Choosing a two-dimensional space with $m=6$ and $\alpha=2$ as an example, where $s', a' \in \{0,1,2\}$ after gridding are the state-actions within the dataset. In this situation it has
\begin{equation}
\begin{array}{c}
\forall (s,a) , s,a \in Z  \\
v(s,a) = v(s-k',a+6k') , k' \in Z
\end{array}
\end{equation}
So, for these $s', a'\in \{0,1,2\}$, while $k'\neq0$
, it have $s-6k'\le-4 $ or $s-6k' \ge 6$. As the policy is learned from static datasets, the state-actions such as (2,6), (0,-4) that are far away from the static dataset will hardly be selected. The state-actions closer to the static dataset are more likely to be selected than those far away from static datasets, which will have the same properties as state-actions in the dataset. This method can ensure that $(s$,$a)$ or the corresponding $(s-k'$,$s+6k')$ has a sufficiently low probability of being selected by the policy, so this method can effectively reduce the impact of the error. The proof can be seen in \ref{gridproof}.

\setcounter{table}{0}   
\setcounter{figure}{0}
\setcounter{section}{0}
\setcounter{equation}{0}
\section{Proofs}
\label{proof and assumption}
\subsection{Count-based Uncertainty}
\label{unc}
In this section, the explanation of the rationality of count-based method in discrete space are supplemented.

\begin{lemma A.}[Hoeffding's inequality] 
\label{lemma1}
Let ${X_{1},...,X_{m}}$ be independent random variables, and ${\it X_{i}\in[a,b], i=1,...,m}$. The empirical mean of the random variable is expressed as:\[{\bar{X_{t}}} = \frac{1}{\tau}\sum\nolimits_{\tau  = 1}^{t} {{X_{\tau} }} \] 
Then Hoeffding's theorem states that:
\begin{equation}
\label{hoeffding}
    \forall u>0, P[E[\bar{X}_{t}]<X_{t}-u]\le e^{-2tu^2}
\end{equation}
\end{lemma A.}%
The proof can be found in ~\cite{hoeffding1994probability}. 
Under the condition of Lemma A.\ref{lemma1} it can obtained that $u(s,a)=\frac{lnT}{n(s,a)}$ is a suitable uncertainty constraint in discrete space.

Since $Q(s,a)$ is an independent random variable in offline RL, $Q(s,a)$ can be substituted into Lemma A.\ref{lemma1} to obtain:
\begin{align}
    P[Q(s,a)>(\bar{Q}(s,a)-u(s,a))]\ge e^{-2tu(s,a)^2}
\end{align}
Select a probability $p$ that allows, it can be obtained from the above equation that $e^{-2tu(s,a)^2}=p$, and then, from $e^{-2tu(s,a)^2}=p$ ,$u(s,a)=\sqrt{{-lnp}/{2n(s,a)}}$ can be obtained. In the application of offline RL, we hope that the estimation of the Q-value will become more accurate as the number of training times increases. Therefore, we hope that the value of $p$ gradually decreases as the training progresses. For example, when {\it T} is used to represent the current training epoch, ${\it p={1}/{T^2}}$meets the above requirements. Use this {\it p}, $\sqrt {{{\ln T}}/{{n(s,a)}}} $ can be obtained. 

\subsection{Uncertainty and $\Gamma^{\rm lcb}$}
\label{cnt_unc}
In this part, $\Gamma^{\rm lcb}$ is used to obtain an uncertainty constraint, and the rationality of using $\Gamma^{\rm lcb}$ as an uncertainty constraint can be demonstrated.

This problem is discussed in tabular cases and general cases.

For finite tabular MDPs, ${\Lambda _t} = \sum\nolimits_{i = 1}^m {\varphi (s_t^i,a_t^i)} \varphi {(s_t^i,a_t^i)^T} + \lambda  \cdot I$. In $\Lambda_t$, parameter $m$ represents all selected state-actions in the static dataset. In this situation, Lemma A.\ref{lemma3} can be get:
\begin{lemma A.}
\label{lemma3}
In tabular MDPs, the uncertainty $u(s,a)$ can be approximated by the counts of each state-action $(s,a)$ using the following method:
\begin{equation}
    u(s,a) = \frac{lnT}{{\sqrt {n(s,a) + \lambda}}}={\Gamma ^{\rm lcb}}(s,a)  
\end{equation}
\end{lemma A.}

\begin{proof}
  In tabular MDPs, define that  
\begin{align}
  \varphi ({s_i},{a_i}) = \left( \begin{array}{c}
0\\
 \vdots \\
{(\ln T)^{\frac{1}{4}}}\\
 \vdots \\
0
\end{array} \right) \in {R^d}
\end{align}
where $d=|S|\times|A|$, value 1 at $i$-th, all other positions in the vector are value 0. Which means that $\varphi({s_i},{a_i})$ is a count of $i$-th state-action $({s_i},{a_i})$. So it can compute that
\begin{equation}
\Lambda  = \left( {\begin{array}{*{20}{c}}
{\lambda  + {n_1}}& \cdots &0& \cdots &0\\
 \vdots & \ddots &{}&{}&{}\\
0&{}&{\lambda  + n_i}&{}&{}\\
 \vdots &{}&{}& \ddots &{}\\
0& \cdots &0& \cdots &{\lambda  + {n_d}}
\end{array}} \right) \in {R^{d\times d}}
\end{equation}
The diagonal of the matrix represents the count of all possible state-actions in the tabular MDP. Then it can compute that:

\begin{equation}
    {\Lambda ^{ - 1}} = \left( {\begin{array}{*{20}{c}}
{\frac{1}{{\lambda  + {n_1}}}}& \cdots &0& \cdots &0\\
 \vdots & \ddots &{}&{}&{}\\
0&{}&{\frac{1}{{\lambda  + {n_i}}}}&{}&{}\\
 \vdots &{}&{}& \ddots &{}\\
0& \cdots &0& \cdots &{\frac{1}{{\lambda  + {n_d}}}}
\end{array}} \right)
\end{equation}

\begin{equation}
\label{eq25}
    {\Gamma ^{\rm lcb}}({s_i},{a_i}) = \varphi ({s_i},{a_i})'{\Lambda ^{ - \frac{1}{2}}}\varphi ({s_i},{a_i}) = \sqrt{\frac{lnT}{{ \lambda  + {n_i}} }}
\end{equation}
$\sqrt{{lnT} / (\lambda  + {n_i})} $ is a common uncertainty constraint in tabular RL, which can be obtained by taking ${lnT}/{t}\ $at Corollary \ref{co1}.
By the above process, Eq\eqref{eq25} can be get. So, $\Gamma^{\rm lcb}$ is a reasonable uncertainty constraint in tabular RL. 
\end{proof}

For general situations, the epistemic uncertainty in a state-action $(s_t,a_t)$ can be considered as the discrepancy between the true Q-value $Q(s_t,a_t)$ and the estimated Q-value $Q^\pi(s_t,a_t)$, which is
\begin{equation}
\varepsilon  = {Q^\pi }({s_t},{a_t}) - Q({s_t},{a_t})
\end{equation}
Due to the assumption of linear MDP, $Q^{\pi}$ and $Q$ both have a linear relationship with $\varphi$, just like
\begin{equation}
{Q^\pi }({s_t},{a_t}) = {\hat \omega_t}'\varphi ({s_t},{a_t})\\,
Q({s_t},{a_t}) = {\omega_t}'\varphi ({s_t},{a_t})
\end{equation}
Where $\hat \omega_t$ is the estimated parameter and $\omega_t$ is the actual parameter.
Regarding the parameter $\hat \omega_t$ is used to estimate the Q-value, since $Q$ can be expressed as $r$ and $V$ by Bellman Equation, the value of $\hat \omega_t$ can be obtained by minimizing the empirical Mean-Square Bellman-Error (MSBE) and regularization term as follows:
\begin{align}
{\hat \omega _t} = \mathop {\min }\limits_{\omega_t  \in {R^d}} \left| {{{\sum\limits_{i = 1}^K {(r_t^i  + {{\hat V}_{t + 1}}(s_{t + 1}^i ) - \varphi (s_t^i ,a_t^i )'\omega )}^2 }} + \lambda \left\| \omega  \right\|_2^2} \right|
\end{align}
The explicit solution of $\hat \omega _t$ can be calculated as:
\begin{equation}
{\hat \omega_t} = \Lambda _t^{ - 1}\sum\limits_{i = 1}^m {\varphi (s_t^i,a_t^i)} ({V_{t + 1}}(s_{t + 1}^i) + r(s_t^i,a_t^i))
\end{equation}
so the epistemic uncertainty can be represented by $\omega_t$, that is
\begin{equation}
\label{31}
\varepsilon  = r(s_t^i,a_t^i) + {V_{t + 1}}(s_{t + 1}^i) - {\omega _t }'\varphi (s_t^i,a_t^i)
\end{equation}
In addition to the above analysis, Assumption A.\ref{as1} is needed.
\begin{assumption A.}
\label{as1}
The prior parameter $ \omega_{t}$ known from the dataset and epistemic uncertainty $\epsilon$ satisfies 
\begin{equation}
\omega_{t} \sim N(0,I\lambda ),\varepsilon \sim N(0,1)
\end{equation}
\end{assumption A.}

Under the condition of Assumption A.\ref{as1}, the following Lemma can be drawn:
\begin{lemma A.}
    Under Assumption A.\ref{as1}, $\Gamma ^{\rm lcb}$ can serve as an approximation to the actual uncertainty.
\end{lemma A.}
\begin{proof}
    $\Gamma ^{\rm lcb}$ can approximate real uncertainty, which means that 
\begin{equation}
    Var(Q^\pi(s_t^i,a_t^i)) \approx {\Gamma ^{\rm lcb}}{(s,a)^2}
\end{equation}
    where
    \[Var(Q^\pi(s_t^i,a_t^i)) \approx Var(\hat \varphi (s_t^i,a_t^i)'{\hat \omega_t })\]
    so the following equation needs to be proven 
\begin{equation}
    Var(\hat \varphi (s_t^i,a_t^i)'{\hat \omega _t}) \approx {\Gamma ^{\rm lcb}}{(s,a)^2}
\end{equation}
    So $\hat \omega _t$ needs to be obtained.
    From Eq\eqref{31} it can know that           
\begin{equation}    
    \begin{array}{l}
r(s_t^i,a_t^i) + V_{t+1}(s_{t + 1}^i)= \varepsilon  + {\omega_t }'\varphi (s_t^i,a_t^i)
\end{array} 
\end{equation}
Since $\varepsilon$ satisfies normal distribution, The following results can be obtained:
\begin{equation}
(r(s_t^i,a_t^i) + {V_{t + 1}}(s_{t + 1}^i))|(s_t^i,a_t^i) \sim N({{ \omega _t}}\varphi (s_t^i,a_t^i),1)
\end{equation}

From ${{\hat \omega_t }}'\varphi (s_t^i,a_t^i) = r(s_t^i,a_t^i) + V_{t+1}(s_{t + 1}^i)$ it can be known that:
\begin{equation}
\hat \omega_t \sim N({{ \omega_t }},1)
\end{equation}
When using $p’$ to denote the probability density function of the distribution. Using Bayes rule it has
\begin{equation}
\begin{array}{c}
p'( \omega_t |D) = \frac{{p'( \omega_t )p'(D| \omega_t )}}{{p'(D)}}
\log p'( \omega_t |D) \\
= \log p'( \omega_t ) + \log p'(D| \omega_t ) + d_1
\end{array}
\end{equation}
Here, $d_1$ is a constant. According to the probability density function of normal distribution, the following equations can be get:
\begin{equation}
\log p'( \omega_t ) =  - \frac{{{{\left\| { \omega_t } \right\|}^2}}}{2} + {d_2}
\end{equation}
\begin{equation}
    \begin{array}{c}
\log p'(D| \omega_t ) = \log \sum\limits_{i = 1}^m {p'((s_t^i,a_t^i,s_{t+1}^i)| \omega )} \\
 = \log \sum\limits_{i = 1}^m {p'(p(s_t^i)| \omega_t )}  = \log \sum\limits_{i = 1}^m {p'(\hat \omega_t |\omega_t )} \\
 = \frac{{\sum\limits_{i = 1}^m {{{\left\| { \omega_t '\varphi (s_t^i,a_t^i) - y_t^i} \right\|}^2}} }}{2}
\end{array}
\end{equation}
so 
\begin{equation}  
\begin{array}{c}
\log p'( \omega_{t} |D) =  - \frac{{{{\left\| { \omega_t } \right\|}^2}}}{2} - \frac{{\sum\limits_{i = 1}^m {{{\left\| { \omega_{t}'\varphi (s_t^i,a_t^i) - y_t^i} \right\|}^2}} }}{2} + b\\
 =   \frac{-{(\left\| { \omega_t } \right\| - \Lambda _t^{ - 1}\sum\limits_{i = 1}^m {\varphi (s_t^i,a_t^i)y_t^i} )'\Lambda _t^{ - 1}(\left\| {\hat \omega_t } \right\| - \Lambda _t^{ - 1}\sum\limits_{i = 1}^m {\varphi (s_t^i,a_t^i)y_t^i} )+2b}}{2}
\end{array}
\end{equation}
$d_1$ and $d$ are constants. From the above equation, The following can be obtained
\begin{equation}
{\hat \omega _t} = \omega_{t} |{D_{in}} \sim N(\Lambda _t^{ - 1}\sum\limits_{i = 1}^m {\varphi (s_t^i,a_t^i)} y_t^i,\Lambda _t^{ - 1})
\end{equation}

so for all $(s_t,a_t)$,
\[Var(Q^\pi(s_t^i,a_t^i)) \approx Var(\hat \varphi (s_t^i,a_t^i)'{\hat \omega_t }) \approx {\Gamma ^{\rm lcb}}{(s,a)^2}\]
\end{proof}

Although the above theorem requires that both $\hat \omega$ and $\varepsilon$ satisfy normal distribution, it can be inferred from the above proof process that this conclusion can extend to distributions with additivity. That is, Lemma A.\ref{lemma2} also applies to partial distributions.

\subsection{Continuity of $\Gamma ^{\rm lcb}$}
\label{apd_cont}
In this section, the proof of Lemma \ref{cont} is also added, which is why $\Gamma ^{\rm lcb}$ is considered a continuous function.

To illustrate the continuity of $\Gamma ^{\rm lcb}$,  $\Gamma ^{\rm lcb}$ should directly be calculated. For this, first represent $\Lambda$ in the following form
\begin{equation}
    \Lambda  = \left( {\begin{array}{*{20}{c}}
{\lambda {\rm{ + }}{{\rm{c}}_{11}}}&{...}&{{{\rm{c}}_{i1}}}& \cdots &{{{\rm{c}}_{d1}}}\\
 \vdots & \ddots &{}&{}& \vdots \\
{{{\rm{c}}_{i1}}}&{}&{\lambda {\rm{ + }}{{\rm{c}}_{ii}}}&{}&{{{\rm{c}}_{di}}}\\
 \vdots &{}&{}& \ddots & \vdots \\
{{{\rm{c}}_{d1}}}& \cdots &{{{\rm{c}}_{di}}}& \cdots &{\lambda {\rm{ + }}{{\rm{c}}_{dd}}}
\end{array}} \right)
\end{equation}
where ${c_{ij}} = \sum\nolimits_{l = 1}^m {{\varphi _l}(s_t^i,a_t^i){\varphi _l}(s_t^j,a_t^j)}$.
Since $\Lambda$ is an invertible matrix, the inverse of matrix $\Lambda$ can be directly calculated as
\begin{equation}
\label{Lamdba}
    {\Lambda ^{ - 1}} = \left( {\begin{array}{*{20}{c}}
{\frac{{{a_{11}}}}{{{b_{11}}}}}&{...}&{\frac{{{a_{i1}}}}{{{b_{i1}}}}}& \cdots &{\frac{{{a_{d1}}}}{{{b_{d1}}}}}\\
 \vdots & \ddots &{}&{}& \vdots \\
{\frac{{{a_{i1}}}}{{{b_{i1}}}}}&{}&{\frac{{{a_{ii}}}}{{{b_{ii}}}}}&{}&{\frac{{{a_{di}}}}{{{b_{di}}}}}\\
 \vdots &{}&{}& \ddots & \vdots \\
{\frac{{{a_{d1}}}}{{{b_{d1}}}}}& \cdots &{\frac{{{a_{di}}}}{{{b_{di}}}}}& \cdots &{\frac{{{a_{dd}}}}{{{b_{dd}}}}}
\end{array}} \right)
\end{equation}
where $a_{ij}$,$b_{ij}$ are polynomials related to $ \varphi (s_t^1,a_t^1)$, $\varphi (s_t^2,a_t^2)$, ..., $\varphi (s_t^m,a_t^m)$. Then, represent 
${(\varphi_1(s_t,a_t) \cdots \varphi_i(s_t,a_t) \cdots \varphi_d(s_t,a_t))^{'}}$
as $\varphi $. 
$\Gamma ^{\rm lcb}$ can be represented as
\begin{equation}
\label{eqlcb}
{\Gamma ^{\rm lcb}} = \varphi '{\Lambda ^{ - 1}}\varphi 
\end{equation}
Plugging Eq\eqref{Lamdba} into Eq\eqref{eqlcb} get
\begin{equation}
    {\Gamma ^{\rm lcb}} = \sum\nolimits_{i = 1}^d {\sum\nolimits_{j = i}^d {\sum\limits_{l = 1}^n {\frac{{{a_{li}}}}{{{b_{li}}}}} {\varphi _i}({s_t},{a_t})} {\varphi _j}({s_t},{a_t})} 
\end{equation}
From this equation, for kernel functions with continuous components, such as Gaussian kernel and other common kernel functions. ${\varphi _i}({s_t},{a_t})$ is continuous, and since ${a_{li}}/{b_{li}}$ is reversible, so ${b_{li}} \neq 0$, and the composition of continuous functions is also a continuous function, then ${a_{li}}/{b_{li}}$ is also a continuous function. And ${\Gamma ^{\rm lcb}}$ is the composite of ${\varphi _i}({s_t},{a_t})$ and ${a_{li}}/{b_{li}}$, therefore it is also a continuous function.

Therefore, based on the above proof, the following Theorem can be obtained, from which it can be known that Lemma \ref{cont} holds.
\begin{theorem A.}
    When the components of each dimension of kernel function $\varphi :S \times A \to {R^d}$ are continuous, ${\Gamma ^{\rm lcb}}$ is also continuous.
\end{theorem A.}

\subsection{Rational of GPC}
\label{gridproof}
\begin{theorem A.}
 Set $p(s,a)$ be the probability $(s,a)$ was selected in a discrete state-action space. $\forall \epsilon>0$, $p'(s_i,a_j)>\epsilon $ it has the probability of selecting a state-action with the same corresponding integer as $p'(s_i,a_j)<\epsilon$. 
\end{theorem A.}

\begin{proof}
We prove this conclusion with $k_1,k_2=\kappa \in Z$ in a two-dimensional state-action space, and for high-dimensional spaces, it can be obtained by analogy.
Since the states used in the training are only the existing states in the dataset, we can know 
\begin{align}
    s \in \{ {s_1}...{s_m}\} ,{s_i} \in Z
\end{align}
Therefore, take any possible $(s,a)$, there are only different possible outcomes less than $m$ satisfy $v (s,a)=v (s',a')$
so, $\forall s_i,a_i,p'(s_i,a_i)>\epsilon$, The following needs to be proven  
\begin{align}
\label{46}
\sum\limits_{j={1}}^{{m'}} {p({s_j},{a_j})} <\epsilon 
\end{align}
Where $({s_j},{a_j})=({s_i}-k,{a_i}+\kappa k)$, $m'< m$.
Due to that state-actions are obtained by agent learning from a static dataset, the lower the selection rate of actions that deviate from existing actions in the dataset, it can be written as:
\begin{equation}
\begin{array}{c}
\forall {s_i} \in \{ {s_1}...{s_k}\} ,\exists {a_{i1}}<{a_{i2}},\forall a<{a_{i1}},a> {a_{i2}}  \\
 p({s_i},a)<\frac{1}{{m'}}\epsilon 
\end{array}
\end{equation}

Denote ${a_{\min }}=\mathop {\min }\limits_{i = 1,...,m'} {a_{i1}}$
, ${a_{\max }}=\mathop {\max }\limits_{i = 1,...,m'} {a_{i2}}$, so when choose $k>a_{max}-a_{min}$ it can plugged into Eq\eqref{46} to get
\begin{equation}
\label{eq47}
\sum\limits_{j={1}}^{{m'}} {p({s_j},{a_j})}  \le m'\mathop {\max }\limits_{j = 1,...,m'} (p({s_j},{a_j})) \le \epsilon 
\end{equation}
By Eq\eqref{eq47}, the theorem is proofed.
\end{proof}

\subsection{Policy optimization guarantee}
\label{p_improve}
In this section, GPC-SAC can ensure the effectiveness of the policy will not decrease after each training in any environment is proven.
The proof section refers to SAC. To prove the conclusion, some relevant definitions of the section are provided.
Likes SAC, the relationship between V and Q in GPC-SAC is that 
\begin{equation}
\label{50}
V({s_t}) = {E_{{a_t}\sim \pi }}[Q({s_t},{a_t}) - \log \pi ({a_t}|{s_t})]   
\end{equation}
$Q({s_t},{a_t})$ is updated by
\begin{equation}
\label{51}
\begin{aligned}
Q({s_t},{a_t}) &= r({s_t},{a_t}) + E_{s_{t + 1}\sim p,a_{t + 1}\sim \pi}[{Q^{{\pi _{new}}}}({s_{t + 1}},{a_{t + 1}})] \\&- u({s_t},{a_t}) 
\end{aligned}
\end{equation}
For the policy $\pi_{new}$ obtained from each update, it can be represented as
\begin{equation}
\label{disbu}
{\pi _{\rm new}} = \arg \mathop {\min }\limits_{} {D_{KL}}(\pi '(.|{s_t})||\frac{{\exp ({Q^{{\pi _{\rm old}}}}(s_t, \cdot ))}}{{{Z^{{\pi _{\rm old}}}}({s_t})}})
\end{equation}
In this equation, $\pi_{\rm old}$ is the current policy, ${{{Z^{{\pi _{\rm old}}}}({s_t})}}$ is used to standardize the entire distribution. Although ${{{Z^{{\pi _{\rm old}}}}({s_t})}}$ is generally difficult to handle, it does not contribute to the gradient of the new policy $\pi _{\rm new}$, so it can be ignored.
After representing $\pi_{\rm new}$ as the above form, the following theorem can be obtained.
\begin{theorem A.}
\label{policy_improve}
Let $\pi_{\rm old}$ be the policy obtained from the current epoch training, $\pi_{\rm new}$ is the new policy optimized by Eq(\ref{disbu}), then it have ${Q^{{\pi _{\rm new}}}}({s_t},{a_t}) \ge {Q^{{\pi _{\rm old}}}}({s_t},{a_t})$ for all $({s_t},{a_t}) \in S \times A $. Here $ \left| A \right| < \infty $.
\end{theorem A.}

\begin{proof}
In Eq(\ref{disbu})
\begin{equation}
\begin{aligned}
\frac{{\exp ({Q^{{\pi _{\rm old}}}}({s_t}, \cdot ))}}{{{Z^{{\pi _{\rm old}}}}({s_t})}} = \frac{{\exp ({Q^{{\pi _{\rm old}}}}({s_t}, \cdot ))}}{{\exp (\log {Z^{{\pi _{\rm old}}}}({s_t}))}} 
\\= \exp ({Q^{{\pi _{\rm old}}}}({s_t}, \cdot )) - \log {Z^{{\pi _{\rm old}}}}({s_t}))    
\end{aligned}
\end{equation}

so 
\begin{equation}
\begin{aligned}
{\pi _{\rm new}}(.|{s_t}) &= \arg \min {D_{KL}}(\pi '(.|{s_t})||\exp ({Q^{{\pi _{\rm old}}}}({s_t}, \cdot )) \\
&- \log {Z^{{\pi _{\rm old}}}}({s_t}))   
\end{aligned}
\end{equation}    

Because expanding the KL divergence can get
\begin{equation}
\begin{aligned}
\begin{array}{l}
KL(\pi ({a_t}|{s_t})||\exp ({Q^\pi }({s_t}, \cdot )) - \log {Z^\pi }({s_t})) 
\\= {E_{{a_t}\sim \pi }}[\log \pi ({a_t}|{s_t}) - {Q^\pi }({a_t},{s_t}) + \log {Z^\pi }({s_t})]
\end{array}
\end{aligned}
\end{equation}

And since for $\pi _{new}$, even in the worst case, when choose $\pi _{new}=\pi _{old}$. 
So it has
\begin{equation}
\begin{aligned}
{E_{{a_t}\sim{\pi _{\rm new}}}}[\log {\pi _{\rm new}}({a_t}|{s_t}) - {Q^{{\pi _{\rm old}}}}({a_t},{s_t}) + \log {Z^{{\pi _{\rm old}}}}({s_t})] \\
\le {E_{{a_t}\sim{\pi _{\rm old}}}}[\log {\pi _{\rm old}}({a_t}|{s_t}) - {Q^{{\pi _{\rm old}}}}({a_t},{s_t}) + \log {Z^{{\pi _{\rm old}}}}({s_t})]   
\end{aligned}
\end{equation}

Since $Z$ only depends on the state, the above equation can be simplified as
\begin{equation}
\begin{aligned}
{E_{{a_t}\sim{\pi _{\rm new}}}}[\log {\pi _{\rm new}}({a_t}|{s_t}) - {Q^{{\pi _{\rm old}}}}({a_t},{s_t})] \\
\le {E_{{a_t}\sim{\pi _{old}}}}[\log {\pi _{\rm old}}({a_t}|{s_t}) - {Q^{{\pi _{\rm old}}}}({a_t},{s_t})]    
\end{aligned}
\end{equation}
Substituting Eq(\ref{50}) into the above equation it can be get that
\begin{equation}
{E_{{a_t}\sim{\pi _{\rm new}}}}[\log {\pi _{\rm new}}({a_t}|{s_t}) - {Q^{{\pi _{\rm old}}}}({s_t},{a_t})] \le {V^{{\pi _{\rm old}}}}({s_t})    
\end{equation}

From Bellman equation Eq(\ref{51}) and the above inequality, it can be known that
\begin{equation}
\label{59}
\begin{aligned}
\begin{array}{l}
{Q^{{\pi _{\rm old}}}}({s_t},{a_t}) = r({s_t},{a_t}) + \gamma {E_{{s_{t + 1}}\sim p}}[{V^{{\pi _{\rm old}}}}({s_{t + 1}})]\\
 \le r({s_t},{a_t}) + \gamma {E_{{s_{t + 1}}\sim p{a_{t + 1}}\sim{\pi _{\rm new}}}}[{Q^{{\pi _{\rm old}}}}({s_{t + 1}},{a_{t + 1}}) \\- \log {\pi _{\rm new}}({a_{t + 1}}|{s_{t + 1}})- {u^{{\pi _{\rm old}}}}({s_n},{a_n})]
 \end{array}
\end{aligned}
\end{equation}
On the other hand
\begin{equation}
\label{60}
   \begin{aligned}
       \begin{array}{l}
{Q^{{\pi _{\rm new}}}}({s_t},{a_t}) = r({s_t},{a_t})\\
 + \gamma E_{{s_{t + 1}}\sim p{a_{t + 1}}\sim {\pi _{\rm new}}}{Q^{{\pi _{\rm new}}}}({s_{t + 1}},{a_{t + 1}})\\
 - \log {\pi _{\rm new}}({a_{t + 1}}|{s_{t + 1}})] - {u^{{\pi _{\rm new}}}}({s_n},{a_n})
\end{array}
   \end{aligned} 
\end{equation}
So consider Eq(\ref{59}) and Eq(\ref{60}). The problem becomes the relationship between
\begin{equation}
E_{{s_{t + 1}\sim p},{a_{t + 1}}\sim{\pi _{\rm new}}}[{Q^{{\pi _{\rm old}}}}({s_{t + 1}},{a_{t + 1}})]
\end{equation}
and
\begin{equation}
    E_{{s_{t + 1}\sim p},{a_{t + 1}}\sim{\pi _{\rm new}}}[{Q^{{\pi _{\rm new}}}}({s_{t + 1}},{a_{t + 1}})]
\end{equation}
Since the MDP this parer is considering is finite, the above equation can be iterated to obtain:
\begin{equation}
\label{61}
\begin{aligned}
\begin{array}{l}
E_{{s_{t + 1}\sim p},{a_{t + 1}}\sim{\pi _{\rm new}}}[r({s_{t + 1}}) - {u^{{\pi _{\rm new}}}}({s_{t + 1}},{a_{t + 1}}) + ...\\
+[E_{{s_n}\sim p,a_n\sim \pi _{\rm new}}[r({s_n}) - {u^{{\pi _{\rm new}}}}({s_n},{a_n})...]]]   
\end{array}
\end{aligned}
\end{equation}
\begin{equation}
\label{62}
\begin{aligned}
\begin{array}{l}
E_{{s_{t + 1}\sim p},{a_{t + 1}}\sim{\pi _{\rm new}}}[r({s_{t + 1}}) - {u^{{\pi _{old}}}}({s_{t + 1}},{a_{t + 1}}) + ...\\
+[E_{{s_n}\sim p,{a_n}\sim{\pi _{\rm new}}}[r({s_n}) - {u^{{\pi _{\rm old}}}}({s_n},{a_n})...]]]
\end{array}
\end{aligned}
\end{equation}
Because for all $s$,$a$ it have ${n_{\rm new}}(s,a) \ge {n_{\rm old}}(s,a)$ so ${u^{{\pi _{\rm new}}}}({s},{a}) \ge {u^{{\pi _{\rm old}}}}({s},{a})$.
By Eq(\ref{60}), Eq(\ref{61}) and Eq(\ref{62}) it can known that
\begin{equation}
\label{63}
\begin{aligned}
\begin{array}{l}
r({s_t},{a_t}) + \gamma E_{{s_{t + 1}\sim p},{a_{t + 1}}\sim{\pi _{\rm new}}}[{Q^{{\pi _{\rm old}}}}({s_{t + 1}},{a_{t + 1}}) 
\\- \log {\pi _{\rm new}}({a_{t + 1}}|{s_{t + 1}})] - {u^{{\pi _{\rm old}}}}({s_t},{a_t})
\\\le r({s_t},{a_t}) + \gamma E_{{s_{t + 1}\sim p},{a_{t + 1}}\sim{\pi _{\rm new}}}[{Q^{{\pi _{\rm new}}}}({s_{t + 1}},{a_{t + 1}}) 
 \\- \log {\pi _{\rm new}}({a_{t + 1}}|{s_{t + 1}})] - {u^{{\pi _{\rm old}}}}({s_t},{a_t})
 \\= r({s_t},{a_t}) + \gamma E_{{s_{t + 1}\sim p}}[{V^{{\pi _{\rm new}}}}({s_{t + 1}})] - {u^{{\pi _{\rm new}}}}({s_t},{a_t}) 
 \\= {Q^{{\pi _{\rm new}}}}({s_t},{a_t})
 \end{array}
 \end{aligned}
\end{equation}
Combining Eq(\ref{59}) and Eq(\ref{63}) it can get that 
\[{Q^{{\pi _{\rm old}}}}({s_t},{a_t}) \le {Q^{{\pi _{\rm new}}}}({s_t},{a_t})\]
So the effectiveness of the policy will not decrease in GPC-SAC.
\end{proof}

\setcounter{table}{0}   
\setcounter{figure}{0}
\setcounter{equation}{0}
\section{Experiment}

\begin{figure}[!h]
\includegraphics[width=1\linewidth]{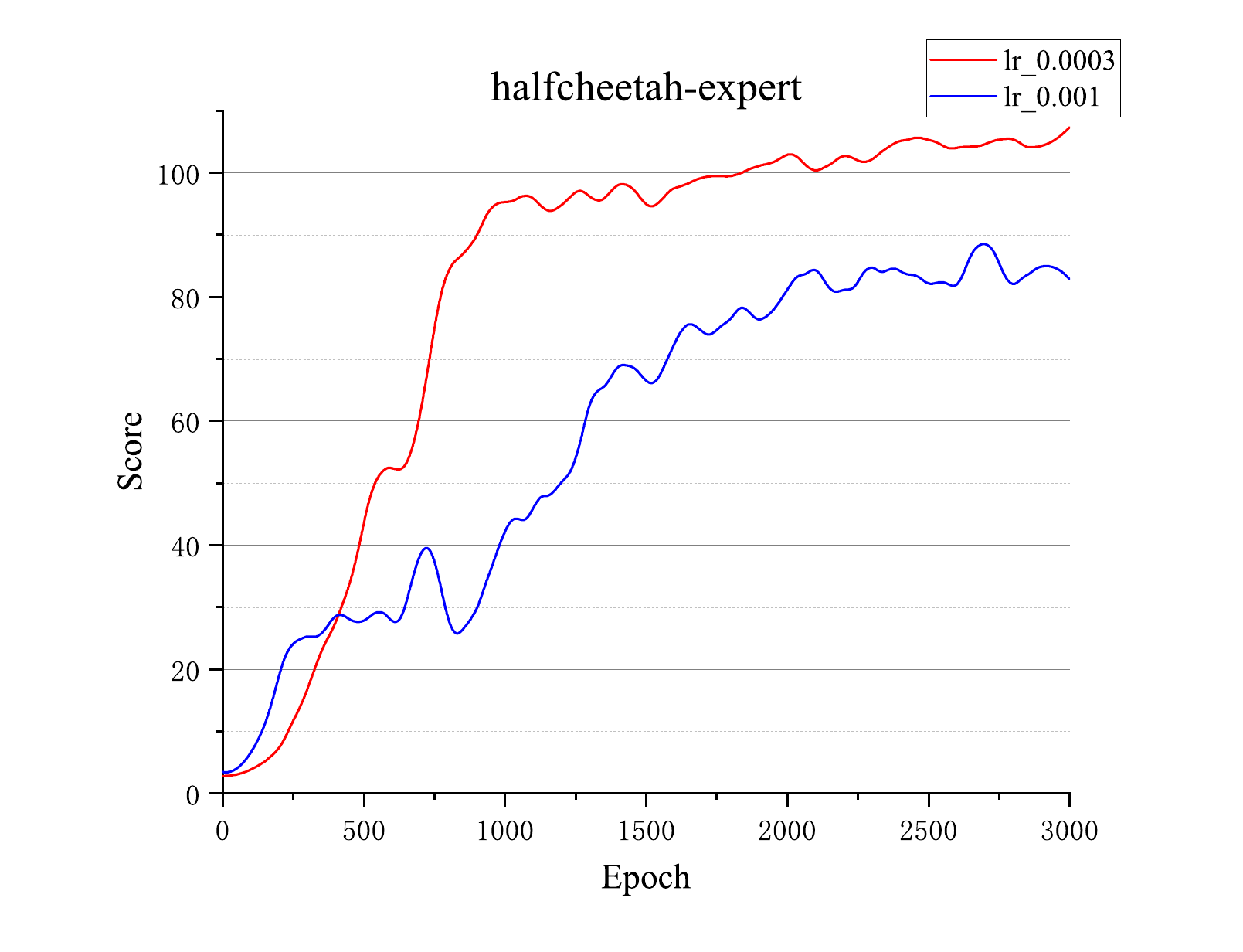}
\caption{The ablation on the learning rate}
\label{rl_rate}
\end{figure}
\begin{figure}
\includegraphics[width=1\linewidth]{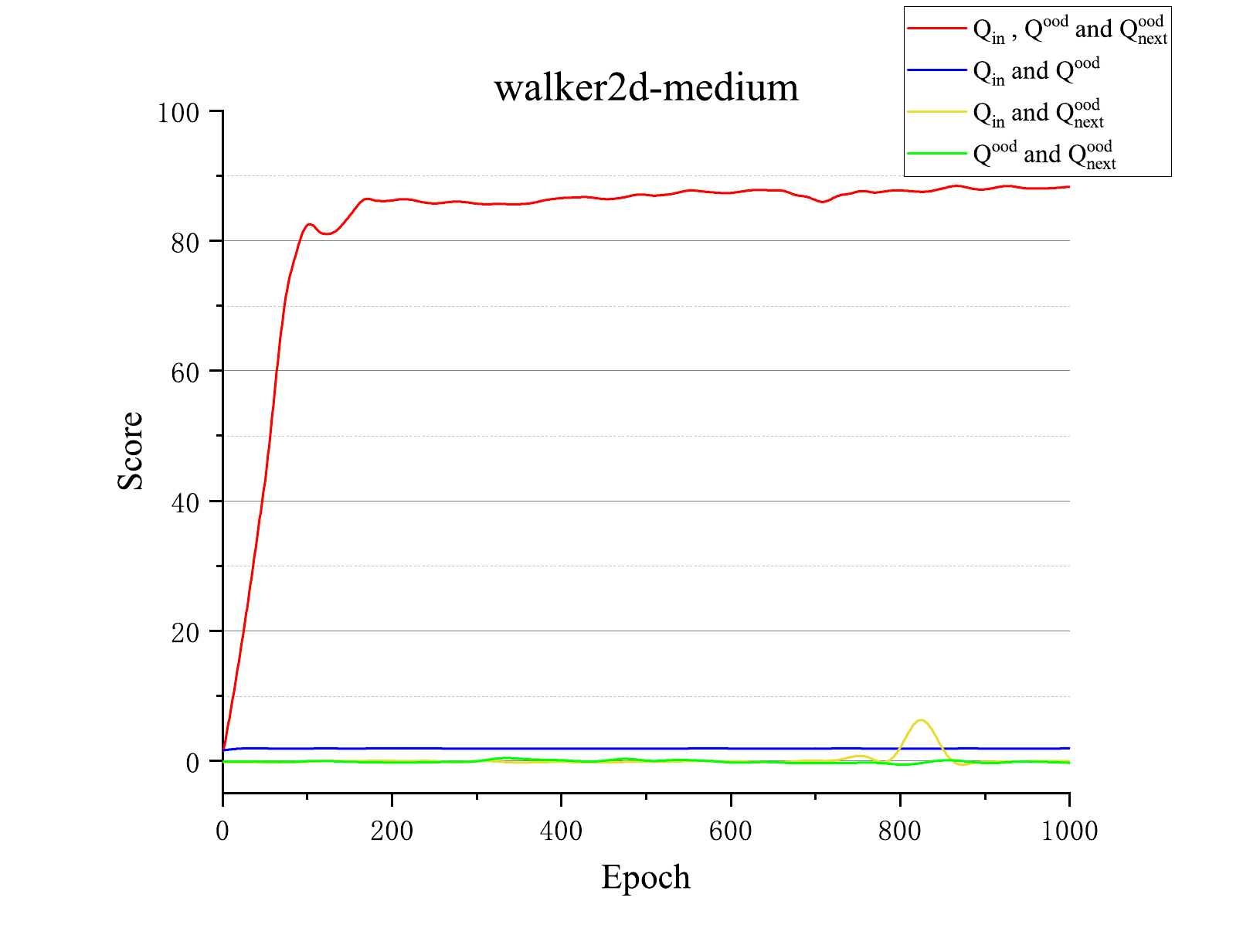}
\caption{The ablation on the Q-function update}
\label{Q-fuc}
\end{figure}

\begin{figure*}[!h]
	\centering
\setlength{\abovecaptionskip}{0.cm}  
\setlength{\abovecaptionskip}{0.cm} 
	\subfigure{
		\includegraphics[width=0.48\linewidth]{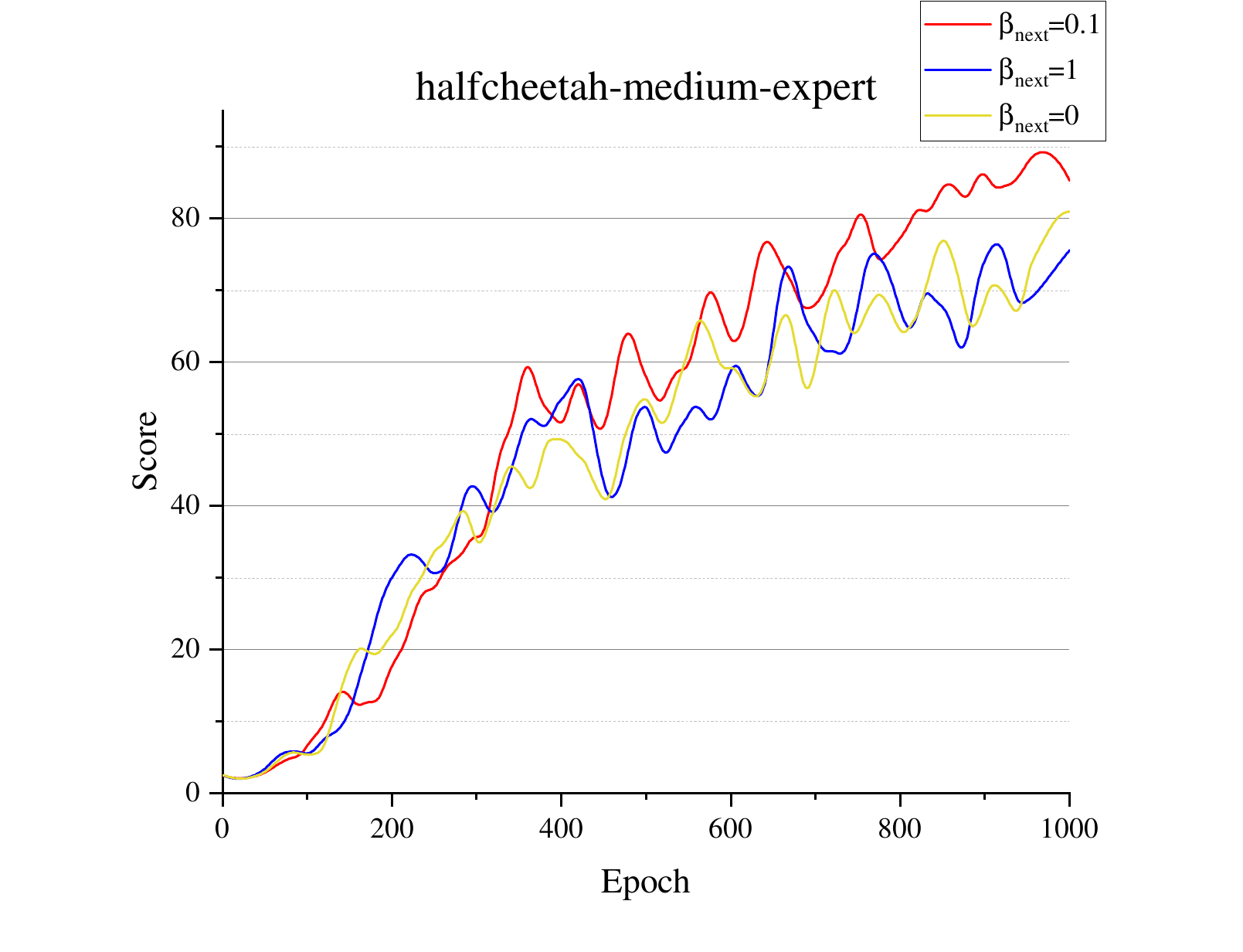}}
	\subfigure{
		\includegraphics[width=0.48\linewidth]{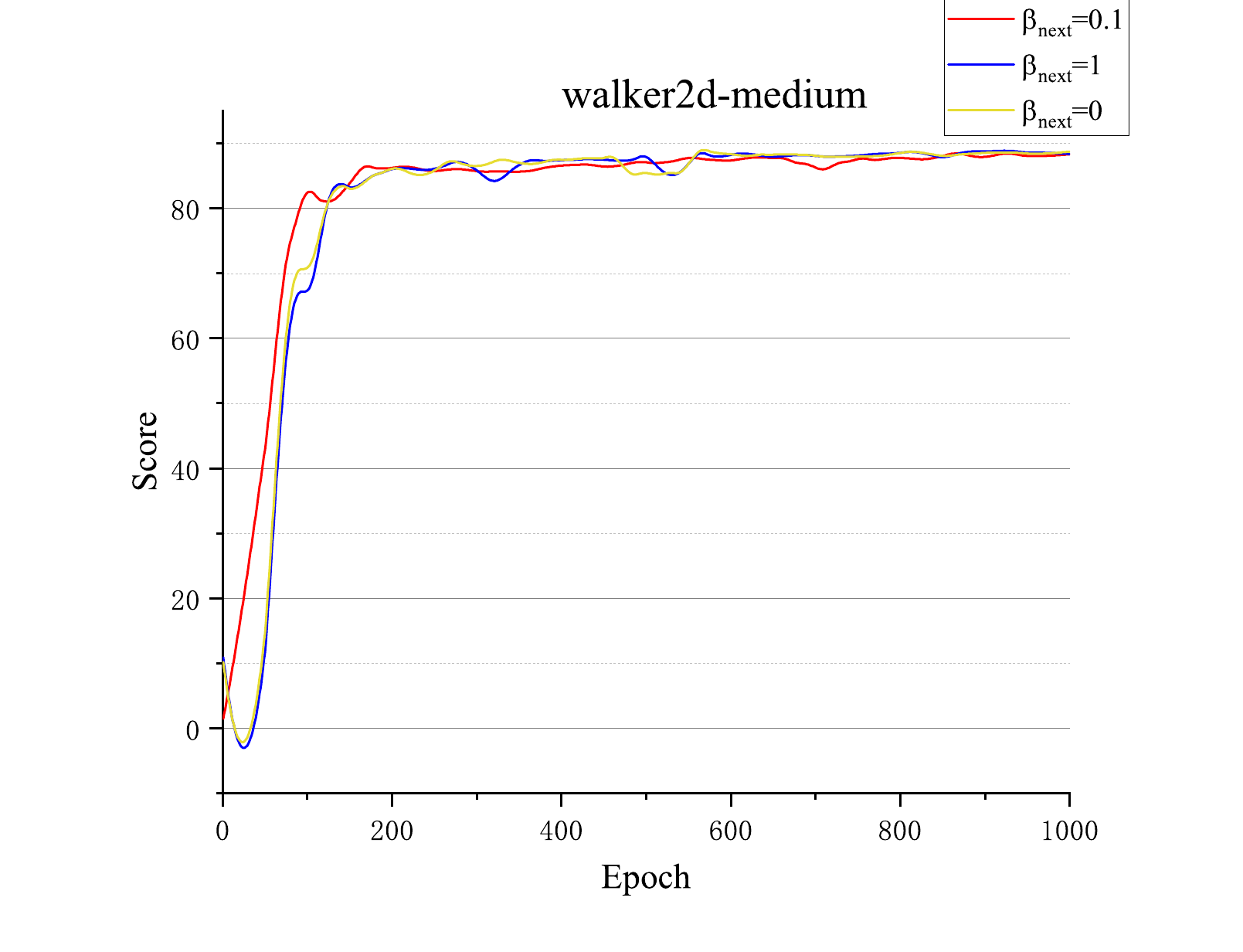}}
	\caption{The ablation on $\beta_{ood}$}
 \label{inood}
\end{figure*}

To replicate our results, the specific implementation of GPC-SAC can be found at \href{https://github.com/lastTarnished/GPC-SAC}{https://github.com/lastTarnished/GPC-SAC}.
We conducted experiments on the latest v2 dataset of D4RL, which can be publicly used at \href{https://github.com/Farama-Foundation/D4RL}{https://github.com/Farama-Foundation/D4RL}. The following are the implementations of the comparative algorithms in our experiment. (1)For CQL, we referred to \href{https://github.com/aviralkumar2907/CQL/tree/master}{https://github.com/aviralkumar2907/CQL/tree/master} which is the official implementation of CQL. (2)For the implementation of IQL, we refer to \href{https://github.com/ikostrikov/implicit\_q\_learning}{https://github.com/ikostrikov/implicit\_q\_learning} this code has simplified the implementation of IQL compared to the official code, resulting in a slight decrease in performance in a few environments. (3)For the implementation of PBRL, we refer to \href{https://github.com/Baichenjia/PBRL}{https://github.com/Baichenjia/PBRL} and no changes were made to the core code.

\subsection{Ablation Experiment}
\label{ablation}
\textbf{Learning rate}: The implementation of CQL in SAC framework and SAC itself use two different sets of Q-function learning rates, policy function learning rate and soft update rate. Two different parameters are compared to choose the appropriate one. The experimental results are in Figure \ref{rl_rate}.

For some parameters, ablation experiments are conducted to demonstrate the effectiveness of these parameters. The setting of ablation experiments is similar to that of the comparative experiment. The remaining parameters are kept unchanged and the parameters that changed are parameters need to be compared. The following are the settings and results of the ablation experiment.

\textbf{Application of OOD sample}: $Q(s_{\rm next}, a_{\rm next}^{\rm ood})$, $Q(s_{\rm in},a_{\rm in}^{\rm ood})$, and $Q (s_{\rm in}, a_{\rm in})$ are used to update the Q-function together in the update. Theoretically, the use of each type of Q-value can be divided into seven types. The following four situations are considered. using three parts at the same time, using $Q (s_{\rm in},a_{\rm in})$, $ Q (s_{\rm in},a_{\rm in}^{\rm ood})$, using $Q (s_{\rm in},a_{\rm in}^{\rm ood})$, $ Q (s_{\rm next},a_{\rm next}^{\rm ood})$, and using $Q (s_{\rm next},a_{\rm next}^{\rm ood})$, $ Q (s_{\rm in},a_{\rm in})$ . The experimental results are in Figure \ref{Q-fuc}.
No ablation experiments were conducted for the case of using only one type of Q, but the case of using only one type of Q can be obtained by analogy from the above results.

\textbf{Uncertainty parameter for $Q(s_{\rm next},a_{\rm next}^{\rm ood})$}: The difference between $s_{\rm next}$ and $s_{\rm in}$ is that $s_{\rm next}$ does not have known actions in the dataset to guide the agent's actions. When using large uncertainty constraints on $(s_{\rm in},a_{\rm in}^{\rm ood})$, the agent can still select $(s_{\rm in},a_{\rm in})$in the static dataset, ensuring that the value of $s_{\rm in}$ is not excessively underestimated. However, if the same uncertainty constraint is chosen for $(s_{\rm next},a_{\rm next}^{\rm ood})$, it may lead to an excessive underestimation of the value of $s_{\rm next}$ and $a_{\rm next}^{\rm ood}$, causing the agent misses some states and actions with high potentially, resulting in a decrease in algorithm performance. Therefore, ablation experiments are conducted on parameter $\beta_{ood}$, selecting $\beta_{\rm ood}$ from \{0, 0.1, 1\}, which respectively indicate not imposing constraints to $(s_{\rm next},a_{\rm next}^{\rm ood})$ at all, imposing smaller constraints to $(s_{\rm next},a_{\rm next}^{\rm ood})$, and imposing equal constraints to $(s_{\rm next},a_{\rm next}^{\rm ood})$ and $(s_{\rm in},a_{\rm in}^{\rm ood})$. The results of the experiment are shown in Figure \ref{inood}.

\end{appendix}
\bibliographystyle{abbrv}
\bibliography{main}

\end{document}